\theoremstyle{plain}
\newtheorem{theorem}{Theorem}[section]
\newtheorem{corollary}[theorem]{Corollary}
\newtheorem{lemma}{Lemma}[section]
\title{Distributed Thompson sampling under constrained communication}
\author{Saba Zerefa, Zhaolin Ren, Haitong Ma, Na Li
\thanks{This work was supported by NSF AI institute 2112085, NSF ASCENT, 2328241, and NSF CNS: 2003111.}
\thanks{S. Zerefa, Z. Ren, H. Ma, and N. Li are affiliated with Harvard School of Engineering and Applied Sciences. Emails: szerefa@g.harvard.edu, zhaolinren@g.harvard.edu, haitongma@g.harvard.edu, and nali@seas.harvard.edu}
}
\begin{document}
\thispagestyle{plain}
\pagestyle{plain}
\maketitle
\renewcommand{\thetheorem}{\arabic{section}.\arabic{theorem}} 

\renewenvironment{proof}{{\itshape Proof.}}{\hfill $\qed$}

\begin{abstract}
In Bayesian optimization, a black-box function is maximized via the use of a surrogate model. We apply distributed Thompson sampling, using a Gaussian process as a surrogate model, to approach the multi-agent Bayesian optimization problem. In our distributed Thompson sampling implementation, each agent receives sampled points from neighbors, where the communication network is encoded in a graph; each agent utilizes their own Gaussian process to model the objective function. We demonstrate theoretical bounds on Bayesian average regret and Bayesian simple regret, where the bound depends on the structure of the communication graph. Unlike in batch Bayesian optimization, this bound is applicable in cases where the communication graph amongst agents is constrained. When compared to sequential single-agent Thompson sampling, our bound guarantees faster convergence with respect to time as long as the communication graph is connected. We confirm the efficacy of our algorithm with numerical simulations on traditional optimization test functions, demonstrating the significance of graph connectivity on improving regret convergence. 
\end{abstract}


\section{Introduction}
Black-box stochastic optimization involves solving problems where the objective function is not explicitly known and can only be accessed through noisy evaluations \cite{stochOpt}. These challenges frequently arise in domains where the evaluation process is costly and uncertain, such as hyperparameter tuning in machine learning \cite{hyperopt}, \cite{smac3}, simulation-based optimization \cite{simOpt}, and experimental design \cite{chem}. A variety of methods have been developed to tackle these problems, including evolutionary algorithms \cite{ea}, particle swarm optimization \cite{PSO}, and finite-difference methods \cite{zeroB}. Among these, Bayesian optimization (BO)  \cite{kushner, mockus}, has emerged as a particularly powerful framework. In contrast to the aforementioned black-box stochastic optimization algorithms which tend to be model-free, by leveraging a probabilistic surrogate model, often a Gaussian process (GP) \cite{gpbook}, BO not only handles the stochastic nature of the evaluations but also balances exploration and exploitation given an appropriately chosen surrogate-based sampling strategy. This data-efficient approach makes BO especially well-suited for optimizing expensive, noisy black-box functions~\cite{frazier}. Moreover, theoretically, BO is also known to satisfy finite-time convergence guarantees to global optima (which we note comes at the cost of a dependence on a term that depends on the complexity of the kernel used to model the underlying function)~\cite{frazier}. To the best of our knowledge, apart from BO, finite-time convergence rates in stochastic optimization are only available for finite-difference type methods. However, due to the local nature of finite-difference type methods, the corresponding finite-time convergence rates for such methods only guarantee convergence to stationary points~\cite{zeroB,zeroA}, in contrast to the convergence to global optima achieved by BO algorithms.

In BO, the generation of new sampling points is based on the current surrogate model. A good sampling strategy should balance exploration and exploitation of the current surrogate, which is key for efficient optimization. Common sampling strategies include acquisition function-based approaches such as expected improvement (EI) \cite{mockus} and BO-upper confidence bound (UCB) \cite{kakade}. Another popular sampling strategy is Thompson sampling, where the next query point is selected as the optimizer of a random function realization sampled from the current posterior~\cite{TS,TSTut}. To evaluate algorithm performance, \textit{regret} is studied, which quantifies the gap between the performance of sampled points and the global optimum \cite{regret}. Types of regret include simple regret, which measures the gap between the optimal value and the performance of the best queried point \cite{simpleregret}, and cumulative regret, which measures the sum of the gaps between the optimal value and the performance of each queried point~\cite{regret2}, \cite{agrawal}.

We are interested in multi-agent BO, where multiple agents can sample the objective function at a single timestep. Much of existing multi-agent BO literature studies batch BO, in which a central coordinator has access to each agent’s acquired information \cite{haitong}, \cite{parallelizedbo}. It then computes the sampling decisions for all agents, and communicates these decisions to each agent. These decisions are disseminated in batches, allowing multiple agents to simultaneously sample points, parallelizing the optimization process \cite{moreBatch}, \cite{ren2024minimizing}. 

Centralized approaches are inapplicable in distributed cases, in which there is no centralized coordinator and each agent must possess a local instance of the optimization algorithm \cite{distributed}. Additionally, they often do not scale well, as they require a central coordinator to manage the processing of all agents' data. Distributed networks are prevalent in real-world applications, such as in multi-robot source seeking and sensor networks \cite{haitong}, \cite{ss}. It may not be the case that all agents have access to all prior sampled points as in the batch setting - communication may be constrained, where some agents are only able to communicate with specific other agents \cite{naomi}. These constraints may be due to limited communication capacity or computational capacity of the agents, or due to physical proximity constraints. Prior literature providing theoretical guarantees for distributed Bayesian optimization require fully connected communication graphs, even in asynchronous cases \cite{parallelizedbo}, \cite{fullydbo}, and thus are inapplicable in settings with constrained communication. In this work, we study the distributed setting with constrained communication, in which at each round, agents send their sampled points to their neighbors and receive points sampled by their neighbors.

\textbf{Our contribution}: We propose a distributed Thompson sampling algorithm for the multi-agent Bayesian optimization problem under constrained communication. In the algorithm, each agent uses their own GP model to pick sampling points via Thompson sampling, and shares the queried points with its neighbors. We provide provable guarantees for the proposed distributed BO. In Theorem \ref{theorem:rab}, we establish a Bayesian average regret bound of $\tilde{O}\left(\frac{\sqrt{\theta(G)}}{\sqrt{Mt}}\right)$, where $M$ is the number of agents, $t$ is the number of optimization rounds, and $\theta(G)$ represents the clique cover number of $G$, i.e. $\theta(G)$ is the smallest number $L$ such that the graph $G$ can be decomposed into $L$ disjoint complete subgraphs. This implies then that the average regret bound is smaller for graphs with higher connectivity, which can be decomposed into a few large disjoint complete subgraphs. We also characterize Bayesian simple regret, demonstrating a bound of $\tilde{O}\left(\sqrt{\frac{1}{t |V_{max}|}} \right),$ where $|V_{max}|$ is the size of the largest complete subgraph of the communication network $G$. We note that this convergence speed is $O(\sqrt{|V_{max}|})$ times better than the best known simple regret rate for sequential single-agent BO, which is $\tilde{O}\left(\sqrt{\frac{1}{t}} \right)$~\cite{kakade}. We numerically test our algorithm on two standard optimization test functions \cite{optTest} with Erdős-Rényi graphs, demonstrating the efficiency of our algorithm. We find that lower regret is achieved with graphs of higher connectivity, supporting our theoretical results.


\section{Problem Formulation and Preliminaries}
\subsection{Problem Formulation}
For a compact set $\mathcal{X}\subset \mathbb{R}^d$, consider an unknown continuous function $f:\mathcal{X}\to\mathbb{R},$ with optimizer $x^*$. The goal is to find the maximum of this function, where we are only able to sample $f$ through expensive and noisy evaluations. We assume any of $M$ agents can query $f$ at any point and receive a noisy value $y=f(x)+\epsilon,$ with $\epsilon \sim \mathcal{N}(0,\sigma_\epsilon^2).$ Agents query $f$ throughout a total of $T$ iterations. For agent $i\in\{1,\ldots,M\}$ and iteration $t\in\{1,\ldots,T\},$ $x_{t,i}$ is the query point, and $y_{t,i}$ is the corresponding evaluation. Define $X_{t,i}=\{x_{1,i},\ldots,x_{t,i}\},$ $Y_{t,i}=\{y_{1,i},\ldots,y_{t,i}\}$ to be the queries and evaluations made by agent $i$ up to time $t.$ The communication network of $M$ agents is described by graph $G=(V,E),$ where $|V|=M,$ and $E\subset\{\{i,j\}:i,j\in V, i\neq j\}.$ An unordered pair $\{i,j\}\in E$ if agents $i$ and $j$ are able to communicate with each other. Additionally, we denote the set of neighbors of agent $i$ as $N(i)=\{j: \{i,j\}\in E\}.$ The data accessible to agent $i$ at time $t$ is $D_{t,i}=\{(x_{\tau,j},y_{\tau,j})\}_{j\in N(i)\cup i, \tau<t}.$ The set $D_{t,i}$ contains all sampled points up to time $t$ by agent $i$ and its neighbors. We do not make any assumptions regarding the structure of the communication network. The graph may even be unconnected. Our analysis will show how the graph structure affects the algorithm's performance. 

\subsection{Gaussian Process}
We use a Gaussian process (GP) to model our unknown objective function $f$ in our BO setting. Recall the unknown continuous objective function $f:\mathcal{X}\to\mathbb{R}.$ 
Let $\mathbf{X}_{D_t}=\{x_1,x_2,\ldots,x_t\},$ where $x_j$ is the $j$th evaluated point, and let $k:\mathcal{X}^2\to\mathbb{R}$ be a kernel function. Define
\begin{align*}\mu_{D_t}(x)&=\mathbf{k}_t(x)^\intercal(\mathbf{K}_{D_t}+\sigma^2_n\mathbf{I})^{-1}\mathbf{y}_{D_t}\\
k_{D_t}(x,x')&=k(x,x')-\mathbf{k}_{D_t}(x)^\intercal(\mathbf{K}_{D_t}+\sigma_n^2\mathbf{I})^{-1}\mathbf{k}_{D_t}(x'),\end{align*}
where $\mathbf{K}_{D_t}:=[k(x',x'')]_{x',x''\in\mathbf{X}_{D_t}},$ $\mathbf{k}_{D_t}(x):=[k(x',x)]_{x'\in\mathbf{X}_{D_t}}$ and $\mathbf{y}_{D_t}=\{f(x')+\epsilon'\}_{x'\in\mathbf{X}_{D_t}},$ where $\epsilon'\sim\mathcal{N}(0,\sigma_\epsilon^2).$ Thus we can define our GP, in which we denote $f|\mathcal{F}_{D_t}\sim GP (\mu_{D_t}(x),k_{D_t}(x,x')).$ Note that due to the nature of the $GP$, it is the case that for any $x\in\mathcal{X},$ $f(x)|\mathcal{F}_{D_t}\sim N(\mu_{D_t}(x),\sigma_{D_t}^2(x)),$ where $\sigma_{D_t}^2(x)=k_{D_t}(x,x)$ \cite{gpbook}. Furthermore, recall the distributed multi-agent setting, where each of $M$ agents have access to queried points in set $D_{t,i},$ where $D_{t,i}$ and $D_{t,j}$ may not be equal for distinct agents $i$ and $j.$ In our distributed setting, each agent $i$ has a unique GP model of $f$ at time $t$, $\mathcal{GP}_{t,i}$, since the data $D_{t,i}$ available to each agent $i$ is different. Thus we denote $f \mid \mathcal{F}_{D_{t,i}} \sim \mathcal{GP}_{t,i}(\mu_{D_{t,i}}(x),k_{D_{t,i}}(x,x'))$. In the GP framework, $x^*,$ the optimizer of $f,$ is treated as a random variable. As a result, $x^*$ has a posterior distribution $x^*|D_t,$ which is the optimal value of the GP $f|D_t.$ This structure is leveraged in our algorithm.

The kernel function $k(\cdot,\cdot)$ can be selected to reflect prior beliefs about the objective function $f,$ such as function smoothness \cite{gpbook}. Common selections of kernel functions include Linear, Squared Exponential, and Matérn kernels, the latter of which was used in the numerical implementations of our algorithm. Note that the GP problem structure does not make any assumptions regarding function convexity, and that for common kernels, convexity is not reflected by kernel selection. 

\subsection{Regret}
Our metric for algorithm performance is regret, which is an assessment of the quality of sampled points. We consider average regret, which quantifies the difference between the optimal value of the function and the queried value for each sampled point. In average regret, this difference is accumulated across all agents and timesteps, and then averaged by the amount of sampled points. To account for randomness of $f$ in our regret expression, we take expectation of average regret to yield the following expression, which we call Bayesian average regret:

\begin{equation}\label{eq:ab}
R_{AB}(t)=\frac{1}{tM}\sum_{\tau=1}^t\sum_{i=1}^M\mathbb{E}[f(x^*)- f(x_{\tau,i})]
\end{equation}

We also consider the simple regret, which is the difference between the optimal value of the function and the best value achieved amongst the previous queried points. This definition of regret is useful because optimization settings focus on locating the extrema of a function, and the simple regret tracks the smallest gap between the value at a sampled point and the optimal value. We take the expectation of simple regret to yield the following expression, which is called Bayesian simple regret:

\begin{equation}\label{eq:sb}
R_{SB}(t)=\min_{i\in\{1,2,\ldots,M\},\tau \in \{1,2,\ldots,t\}}\mathbb{E}[f(x^*)- f(x_{\tau,i})]
\end{equation}

In our theoretical analysis, we provide bounds on Bayesian average regret and Bayesian simple regret.

\subsection{Thompson Sampling}
Thompson sampling is an algorithm for sequential decision making that can be utilized in this context for determining the next point of the objective function to query \cite{TS}. When using Thompson sampling in our Bayesian optimization framework, an acquisition function is sampled from the posterior distribution of the Gaussian process. The maximizer of this function is the next query point at which the black-box objective function is sampled. The Gaussian process is then updated with new information from this sample, and the process repeats for the duration of the experiment.

In sequential single-agent Thompson sampling, each subsequent query point is determined based on a single model updated on all prior sampled points. Alternatively, in batch Thompson sampling, multiple query points are determined as a set at each round, and the objective function is sampled in parallel \cite{parallelizedbo}, \cite{ren2024minimizing}. Batch Thompson sampling is advantageous in systems capable of parallelizing, e.g. multi-agent systems, because it allows for convergence in fewer number of rounds than sequential single-agent Thompson sampling. 

Batch Thompson sampling is centralized, with all agents having access to the same information. However, this may not be realistic in real-world situations, where communication between agents may be constrained due to bandwidth limitations, computational constrictions, or privacy concerns. In these cases, agents may only have access to the sampled points by few other agents, and thus datasets available to distinct agents may differ. We propose a distributed Thompson sampling algorithm for this constrained communication case, and provide theoretical guarantees for the algorithm. 

\section{Algorithm: Distributed Thompson Sampling}
In our implementation of distributed Thompson sampling, each of $M$ agents have distinct Gaussian processes $\mathcal{GP}_i$ for modeling the objective function. At each time step $t,$ all agents update their GPs with the data history available to them. The agent then queries the objective function at $x_{t,i},$ which is the maximizer of the acquisition function sampled from the posterior GP, $\hat{f}_{t,i}\sim \mathcal{GP}_{t,i}.$ Each agent then communicates its sampled point to its neighbors, receives the points sampled by their neighbors, and updates their data history accordingly. The collection of data received by neighbors of agent $i$ at time $t$ is denoted as $C_{t,i}=\{(x_{t,j},y_{t,j})\}_{j\in N(i)}.$ Our method is shown in Algorithm \ref{alg:TSReg}. 
We stress that while we do assume a synchronous global clock, there exists no centralized coordinator in our algorithm that coordinates the queries of the different agents.
\begin{algorithm}[H]
\caption{Distributed Thompson Sampling}
\label{alg:TSReg}
\begin{algorithmic}[1] 
 \item Place GP prior on $f$
\For{i$= 1,\ldots,M$} 
    \State Initial data $D_{1,i}$
    \State $\mathcal{GP}_{0,i}\leftarrow GP$
\EndFor
    \For{$t= 1,\ldots,T$} 
        \For{i$= 1,\ldots,M$} 
            \State Update posterior $\mathcal{GP}_{t,i}$ conditioned on $D_{t,i}$ 
            \State Sample $\hat{f}_{t,i}\sim \mathcal{GP}_{t,i}$
            \State Choose next query point 
            \State \hspace{1cm}$x_{t,i}\leftarrow \arg\max_x \hat{f}_{t,i}(x)$
            \State Observe $y_{t,i}$
            \State Broadcast $(x_{t,i},y_{t,i})$  to neighbors $N(i)$;
            \State Collect evaluations $C_{t,i}$ from neighbors $N(i)$ 
            \State Update data history $D_{t+1,i}\leftarrow D_{t,i}\cup C_{t,i}\cup \{(x_{t,i},y_{t,i})\}$
        \EndFor
    \EndFor
\end{algorithmic}
\end{algorithm}

In step 11, we select the next sampling point of the objective function by finding the argmax of a function drawn from the posterior distribution of the GP. In our numerical implementation, we did so using gridsearch, but such an approach is computationally expensive for higher dimensional search spaces. Efficient computation of the argmax for Thompson sampling in high dimensional spaces is an active area of research, and a direction for future work.

\subsection{Theoretical Result}

We analyze the performance of the distributed Thompson sampling algorithm on the Bayesian average regret and Bayesian simple regret metrics. Our regret bound depends on the number of timesteps $T$ and the structure of the agent communication graph G. As in prior work, we utilize notions from information theory in our regret bound \cite{russovanroy}.

Our regret bound involves the Maximum Information Gain (MIG), which is a constant that captures the complexity of the objective function. MIG has been shown to be bounded for several kernel functions commonly used with GPs, including Squared Exponential and Matérn kernels, the latter of which was used in our numerical implementation \cite{kakade}. 

Let $D=\{x_1,\ldots,x_t\}\subset \mathcal{X},$ and define $y_D=\{(x,f(x)+\epsilon):x\in D\}.$ The MIG is denoted as 

\begin{equation}\label{eq:MIG}\Psi_t=\max_{D\subset \mathcal{X},|D|=t} I(f;y_D),\end{equation} 

\noindent where $I$ is the Shannon Mutual Information. The MIG $\Psi_t$ represents the largest mutual information gain from $f$ by sampling $t$ points. Additionally, for any positive integer $n$, we define the constant $\xi_n,$ which bounds the information gain of the current round of evaluations \cite{parallelizedbo}. Suppose $|D|=t$ points were already sampled, and $i$ points are being queried in the current round of evaluations, with $i<n;$ denote these points in set $A,$ where $A\subset \mathcal{X},$ and $y_{A}=\{(x,f(x)+\epsilon):x\in A\}.$ Then for $i\geq 1,$ $\xi_n$ satisfies 

\begin{equation}\label{eq:xi}\max_{A\subset\mathcal{X},|A|<n} I(f;y_{A}|y_D)\leq \frac12\log(\xi_n).\end{equation}
We next provide a bound for Bayesian average regret for an $M$ agent system with communication graph $G.$
\begin{theorem}
\label{theorem:rab}
Suppose $k(x,x')\leq 1$ for all $x,x'.$ Let $\{G_k\}_{k\in\{1,\ldots,n\}}$ be a collection of $n$ disjoint complete subgraphs of communication graph $G=(V,E)$, where $G_k=(V_k,E_k),$ and $\cup_{k\in\{1,\ldots,n\}}V_k=V.$ Then the Bayesian average regret after $t$ timesteps satisfies $R_{AB}(t)\leq \frac{1}{M}\sum_{k=1}^n |V_k| (\frac{C_1}{t|V_k|}+\sqrt{\frac{C_2\xi_{|V_k|}\beta_t\Psi_{t|V_k|}}{t|V_k|}}),$ where $\beta_t=2\log(t^2M|\mathcal{X}|),$ $C_1=\frac{\sqrt{2}\pi^{3/2}}{12},$ and $C_2=\frac{2}{\log(1+\sigma_\epsilon^{-2})}.$
\end{theorem}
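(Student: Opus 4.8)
The plan is to adapt the information-theoretic analysis of Thompson sampling to the distributed setting, using the disjoint clique decomposition $\{G_k\}$ to turn each clique into a batch-Thompson-sampling subproblem of batch size $|V_k|$. The property to exploit is posterior matching: since $x_{\tau,i}=\arg\max_x\hat f_{\tau,i}(x)$ with $\hat f_{\tau,i}\sim\mathcal{GP}_{\tau,i}$ drawn from the posterior of $f$ given $\mathcal{F}_{D_{\tau,i}}$, the conditional law of $x_{\tau,i}$ given $\mathcal{F}_{D_{\tau,i}}$ coincides with that of $x^*$ given $\mathcal{F}_{D_{\tau,i}}$, and $(f,\hat f_{\tau,i})$ are conditionally i.i.d.\ from this posterior. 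Introducing the agent-local upper confidence bound $U_{\tau,i}(x)=\mu_{D_{\tau,i}}(x)+\beta_\tau^{1/2}\sigma_{D_{\tau,i}}(x)$ with $\beta_\tau=2\log(\tau^2M|\mathcal{X}|)$, I would split
\[ \mathbb{E}[f(x^*)-f(x_{\tau,i})]=\mathbb{E}\big[f(x^*)-U_{\tau,i}(x^*)\big]+\mathbb{E}\big[U_{\tau,i}(x^*)-U_{\tau,i}(x_{\tau,i})\big]+\mathbb{E}\big[U_{\tau,i}(x_{\tau,i})-f(x_{\tau,i})\big]. \]
The middle term is zero because $U_{\tau,i}$ is $\mathcal{F}_{D_{\tau,i}}$-measurable and $x^*,x_{\tau,i}$ are equidistributed given $\mathcal{F}_{D_{\tau,i}}$; the third term equals $\beta_\tau^{1/2}\,\mathbb{E}[\sigma_{D_{\tau,i}}(x_{\tau,i})]$ since conditional independence gives $\mathbb{E}[f(x_{\tau,i})\mid\mathcal{F}_{D_{\tau,i}}]=\mathbb{E}[\mu_{D_{\tau,i}}(x_{\tau,i})\mid\mathcal{F}_{D_{\tau,i}}]$. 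Summing over $\tau\le t$, $i\le M$ and dividing by $tM$ reduces the theorem to bounding two sums.

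For the overshoot term $\mathbb{E}[f(x^*)-U_{\tau,i}(x^*)]$, I would union-bound over $\mathcal{X}$, use the Gaussian tail estimate $\mathbb{E}[(Z-a)^+]\le e^{-a^2/2}/\sqrt{2\pi}$ for $Z\sim\mathcal{N}(0,1)$, and bound $\sigma_{D_{\tau,i}}(x)\le\sqrt{k(x,x)}\le1$ (the only place $k\le1$ is used). The choice of $\beta_\tau$ makes $e^{-\beta_\tau/2}=(\tau^2M|\mathcal{X}|)^{-1}$, so the per-round contribution decays like $\tau^{-2}$ and summing over $\tau$ produces the factor $\sum_\tau\tau^{-2}=\pi^2/6$; collecting the constant $\tfrac{1}{\sqrt{2\pi}}\cdot\tfrac{\pi^2}{6}=\tfrac{\sqrt2\,\pi^{3/2}}{12}=C_1$ and regrouping by clique yields the $C_1/(t|V_k|)$ pieces.

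The crux is the information-gain term $\sum_{\tau\le t}\sum_{i=1}^M\beta_\tau^{1/2}\mathbb{E}[\sigma_{D_{\tau,i}}(x_{\tau,i})]$. I would (i) group agents by clique; since $G_k$ is complete, for $i\in V_k$ the dataset $D_{\tau,i}$ contains the common within-clique history $\bar D_\tau^{(k)}=\{(x_{s,j},y_{s,j}):j\in V_k,\ s<\tau\}$, and by monotonicity of GP posterior variance $\sigma_{D_{\tau,i}}(\cdot)\le\sigma_{\bar D_\tau^{(k)}}(\cdot)$, so the extra cross-clique observations are harmlessly discarded and all agents in $V_k$ share one posterior each round; (ii) within super-round $\tau$, order the $|V_k|$ agents and insert their current-round observations one at a time, so that $\sigma_{\bar D_\tau^{(k)}}^2$ is a posterior variance with up to $|V_k|-1$ most-recent observations removed, and invoke the definition of $\xi_{|V_k|}$ in~\eqref{eq:xi} to pay only a factor $\xi_{|V_k|}$ for passing to the fully sequential posterior variance; (iii) apply Cauchy--Schwarz over the $t|V_k|$ sequential queries of clique $k$ together with the standard information-gain bound that the sum of these sequential posterior variances is at most $\frac{2}{\log(1+\sigma_\epsilon^{-2})}\Psi_{t|V_k|}=C_2\Psi_{t|V_k|}$, which follows from~\eqref{eq:MIG} and $\log(1+\sigma_\epsilon^{-2}u)\ge u\log(1+\sigma_\epsilon^{-2})$ on $u\in[0,1]$. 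This bounds clique $k$'s contribution by $\beta_t^{1/2}\sqrt{t|V_k|\,\xi_{|V_k|}\,C_2\,\Psi_{t|V_k|}}$ after using $\beta_\tau\le\beta_t$; dividing by $tM$, recombining with the overshoot pieces, and summing over $k$ gives the claimed bound.

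The main obstacle I expect is step (ii): rigorously showing that the staleness of each agent's posterior within a round — an agent in $V_k$ queries without seeing its clique-mates' current-round points — inflates the posterior variance by at most $\xi_{|V_k|}$, which requires expressing the ratio of posterior variances as an incremental mutual information and applying~\eqref{eq:xi} with the correct conditioning set. A secondary subtlety is confirming that each $\mathcal{GP}_{\tau,i}$ is the exact Bayesian posterior given $\mathcal{F}_{D_{\tau,i}}$ despite the overlapping, asynchronously grown data histories across agents — i.e.\ that every query point is chosen in a previsible manner relative to the data being conditioned on, together with independent sampling randomness — since the posterior-matching identity and the vanishing middle term both rely on it.
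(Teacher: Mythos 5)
Your proposal follows essentially the same route as the paper's proof: the same clique decomposition of $R_{AB}(t)$, the same three-way split via $U_{\tau,i}$, the vanishing middle term by posterior matching, the Gaussian-tail union bound yielding $C_1$, and the same chain of posterior-variance monotonicity ($\bar D_\tau \subset D_{\tau,i}$), the mutual-information ratio identity to pay $\xi_{|V_k|}^{1/2}$ for within-round staleness, Cauchy--Schwarz, and the maximum-information-gain bound yielding $C_2\Psi_{t|V_k|}$. The step you flag as the main obstacle is handled in the paper exactly as you anticipate, via the identity $\sigma_A(x)/\sigma_{A\cup B}(x)=\exp(I(f;y_B\mid y_A))$ combined with the definition of $\xi_n$ in Equation~\eqref{eq:xi}.
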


\begin{proof}
The structure of our proof follows techniques from Kandasamy et al. \cite{parallelizedbo}. We aim to provide a bound on Bayesian average regret. Our proof begins by noting that we can develop an expression for Bayesian average regret by considering the Bayesian average regret of specific subsets of agents. We then decompose this into three sums, each of which utilize a confidence function $U_{t,i}(\cdot).$ We bound each of these sums using notions from information theory, allowing us to use information gain constants introduced in Equations \ref{eq:MIG} and \ref{eq:xi} to analyze the efficacy of the sampling process.

We bound the Bayesian average regret affiliated with agents in communication graph $G$ by bounding the Bayesian average regret within each complete subgraph of graph $G.$
Let $G=(V,E)$ be the communication graph for the $M$ agents. We can construct a collection of $n$ disjoint complete subgraphs $\{G_k\}_{k\in\{1,\ldots,n\}},$ where each $G_k=(V_k,E_k)$ is a subgraph of $G,$ with $\cup_{k\in\{1,2,\ldots,n\}} V_k=V.$ 

Recall from Equation \ref{eq:ab} that Bayesian average regret $R_{AB}(t)=\frac{1}{tM}\sum_{\tau=1}^t\sum_{i=1}^M\mathbb{E}[f(x^*)- f(x_{\tau,i})].$ We also introduce $R_{AB,k}(t)= \frac{1}{t|V_k|}\sum_{\tau=1}^t \sum_{i\in V_k}\mathbb{E}[f(x^*)- f(x_{\tau,i})],$ which is the Bayesian average regret affiliated with agents in $V_k.$ Recalling the partition of the vertex set $V$ into $\{V_k\}_{k\in\{1,\ldots,n\}},$ we may rewrite Bayesian average regret as follows:

\begin{align*}
R_{AB}(t)&=\frac{1}{tM}\sum_{\tau=1}^t\sum_{i=1}^M\mathbb{E}[f(x^*)- f(x_{\tau,i})]\\
&=\frac{1}{tM}\sum_{\tau=1}^t\sum_{k=1}^n\sum_{i\in V_k}\mathbb{E}[f(x^*)- f(x_{\tau,i})]\\
&=\frac{1}{tM}\sum_{k=1}^n\sum_{\tau=1}^t\sum_{i\in V_k}\mathbb{E}[f(x^*)- f(x_{\tau,i})]\\
&=\frac{1}{M}\sum_{k=1}^n |V_k|R_{AB,k}(t)
\end{align*}

Thus, it suffices to focus on bounding $R_{AB,k}(t).$ Define $U_{t,i}(x)=\mu_{D_{t,i}}(x)+\beta_t^{1/2}\sigma_{D_{t,i}}(x).$ To upper bound $R_{AB,k}(t),$ we can decompose the sum $\sum_{\tau=1}^t \sum_{i\in V_k}\mathbb{E}[f(x^*)- f(x_{\tau,i})]$ as follows:

\begin{align*}
&\sum_{\tau=1}^t \sum_{i\in V_k}\mathbb{E}[f(x^*)- f(x_{\tau,i})]\\
&=\sum_{\tau=1}^t\sum_{i\in V_k}\mathbb{E}[f(x^*)-U_{\tau,i}(x^*)
+U_{\tau,i}(x^*)-U_{\tau,i}(x_{\tau,i})\\
&\hspace{2cm}+U_{\tau,i}(x_{\tau,i})
-f(x_{\tau,i})]\\
&=\sum_{\tau=1}^t\sum_{i\in V_k} \underbrace{\mathbb{E} [f(x^*)-U_{\tau,i}(x^*)]}_{S1} +\underbrace{\mathbb{E} [U_{\tau,i}(x^*)-U_{\tau,i}(x_{\tau,i})]}_{S2}\\
&\hspace{2cm}+\underbrace{\mathbb{E} [U_{\tau,i}(x_{\tau,i})
-f(x_{\tau,i})]}_{S3} 
\end{align*}

We will now bound each of these sums. 

\textbf{S1}. Let's begin by upper bounding the sum \hspace{2.5cm}$S1=\sum_{\tau=1}^t\sum_{i\in V_k} \mathbb{E} [f(x^*)-U_{\tau,i}(x^*)].$

\begin{align}
S1&=\sum_{\tau=1}^t\!\sum_{i\in V_k} \mathbb{E} [f(x^*)-U_{\tau,i}(x^*)] \\
&\leq \sum_{\tau=1}^t\sum_{i\in V_k} \mathbb{E}\!\!\left[\mathbb{E} [\mathbb{I}\{f(x^*)\!>\!U_{\tau,i}(x^*)\} \!\!\left(f(x^*)\!-\!U_{\tau,i}(x^*) \right)\!\mid\! \mathcal{F}_{D_{t,i}}]\right] \\
&\leq  \sum_{\tau=1}^t\sum_{i\in V_k} \sum_{x\in\mathcal{X}} \frac{e^{-\beta_\tau/2}}{\sqrt{2\pi}} \leq \sum_{\tau=1}^t\frac{1}{\tau^2\sqrt{2\pi}} \leq \frac{\sqrt{2}\pi^{3/2}}{12} 
\end{align}

Line (6) upper bounds $S1$ by positive terms by making use of the indicator function $\mathbb{I}\{f(x^*)>U_{\tau,i}(x^*)\},$  which takes the value of $1$ when the condition $f(x^*)>U_{\tau,i}(x^*)$ is satisfied and $0$ otherwise. Line (7) utilizes Lemma \ref{lem:ething} in Appendix \ref{appendix:additional_analysis} to bound the expectation of positive terms in a normal distribution. Additionally, Line (7) results from substituting for $\beta_\tau,$ and the fact that $\sum_{j=1}^\infty \frac{1}{j^2}=\frac{\pi^2}{6}.$ Therefore, we have established an upper bound for $S1$.

\textbf{S2}. We evaluate the expression $S2=\sum_{\tau=1}^t\sum_{i\in V_k}\mathbb{E} [U_{\tau,i}(x^*)-U_{\tau,i}(x_{\tau,i})].$ First let's focus on the interior of the summation, $\mathbb{E} [U_{\tau,i}(x^*)-U_{\tau,i}(x_{\tau,i})].$ We will proceed to show that this expression evaluates to $0.$

By the law of total expectation, $\mathbb{E} [U_{\tau,i}(x^*)-U_{\tau,i}(x_{\tau,i})]=\mathbb{E}[\mathbb{E} [U_{\tau,i}(x^*)-U_{\tau,i}(x_{\tau,i})]|D_{t,i}]$. Because $x_{t,i}$ is sampled from the posterior distribution of $x^*|D_{t,i},$ we must have that $x_{t,i}$ and $x^*$ have the same distribution after conditioning on the acquired data, and thus $x_{t,i}|D_{t,i}\sim x^*|D_{t,i}.$ We also notice that $U_{t,i}$ is deterministic when conditioned on $D_{t,i}.$ Thus we have that $\mathbb{E}[\mathbb{E} [U_{\tau,i}(x^*)-U_{\tau,i}(x_{\tau,i})]|D_{t,i}]]=0,$ and consequently $S2=0.$

\textbf{S3}. Lastly, we bound $S3=\sum_{\tau=1}^t\sum_{i\in V_k} \mathbb{E} [U_{\tau,i}(x_{\tau,i})
\!-\!f(x_{\tau,i})].$ For the evaluation of this sum, we will introduce some additional notation. Denote $\bar{D}_t=\{(x_{\tau,j},y_{\tau,j})\}_{j\in V_k,\tau<t}.$ We can think of $\bar{D}_t$ as representing the data acquired by round $t$ in a batch setting exclusively by agents in $V_k$. Additionally, define
$\sigma_{t,i}(x) := \sigma(x) \mid \bar{D}_t \cup \left\{(x_{t,j},y_{t,j}) \right\}_{j \in V_k, j < i}.$ We pick to define $\sigma_{t,i} (\cdot)$ in this way to impose an ordering of the acquired data; in this sense, $\sigma_{t,i}(\cdot)$ depends on $(t-1)|V_k|+i-1$ previously sampled points. With this additional notation, we are well-equipped to upper bound $S3.$

\begin{align}
S3&=\sum_{\tau=1}^t\sum_{i\in V_k}\mathbb{E} [U_{\tau,i}(x_{\tau,i})-f(x_{\tau,i})] \\
&=\sum_{\tau=1}^t\sum_{i\in V_k}\mathbb{E} [\mathbb{E}[\mu_{D_{\tau,i}}(x_{\tau,i})+\beta_{\tau}^{1/2}\sigma_{D_{\tau,i}}(x_{\tau,i})\nonumber\\
&\hspace{1.75cm}-f(x_{\tau,i})]|D_{\tau,i}]]\\
&=\sum_{\tau=1}^t\sum_{i\in V_k}\mathbb{E} [\beta_{\tau}^{1/2}\sigma_{D_{\tau,i}}(x_{\tau,i})]\\
&\leq \beta_{t}^{1/2}\sum_{\tau=1}^t\sum_{i\in V_k} \mathbb{E}[\sigma_{D_{\tau,i}}(x_{\tau,i})]\\
&\leq \beta_{t}^{1/2}\sum_{\tau=1}^t\sum_{i\in V_k}\mathbb{E}[\sigma_{\bar{D}_{\tau}}(x_{\tau,i})] \\
&\leq \beta_{t}^{1/2}\mathbb{E}\left[\sum_{\tau=1}^t\sum_{i\in V_k}\sigma_{\tau,i}(x_{\tau,i})\exp(I(f;\{y_{\tau,j}\}_{j<i}|y_{\bar{D}_\tau})\right]\\
&\leq \beta_{t}^{1/2}\mathbb{E}\left[\sum_{\tau=1}^t\sum_{i\in V_k}\sigma_{\tau,i}(x_{\tau,i})\xi_i^{1/2}\right] \\
&\leq \beta_{t}^{1/2}\xi_{|V_k|}^{1/2}\mathbb{E}\left[\sum_{\tau=1}^t\sum_{i\in V_k}\sigma_{\tau,i}(x_{\tau,i})\right]\\
&\leq  \beta_{t}^{1/2}\xi_{|V_k|}^{1/2} \mathbb{E}\left[\Big(t|V_k|\sum_{\tau=1}^t\sum_{i\in V_k}\sigma^2_{\tau,i}(x_{\tau,i})\Big)^{1/2}\right] \\
&\leq  \beta_{t}^{1/2}\xi_{|V_k|}^{1/2} \sqrt{\frac{2t|V_k|\Psi_{t|V_k|}}{\log(1+\sigma_\epsilon^{-2})}} \leq \sqrt{\frac{2\xi_{|V_k|} t|V_k|\beta_t\Psi_{t|V_k|}}{\log(1+\sigma_{\epsilon}^{-2})}}
\end{align}

Lines (9) and (10) follow by the law of total expectation. Line (11) follows by noting that $\beta_\tau$ is increasing with $\tau.$ Recall that because $G_k$ is a subgraph of $G,$ for all agents $i\in V_k,$ $V_k\subset \{N(i)\cup i\}.$ Therefore, $\bar{D}_\tau$ is contained in $D_{\tau,i}$ for all agents $i\in V_k.$ In essence, $\bar{D}_\tau\subset D_{\tau,i}\implies \sigma_{\bar{D}_\tau}\geq \sigma_{D_{\tau,i}}.$ Line (12) applies this property. Line (13) follows from Lemma \ref{lem:xi} in Appendix \ref{appendix:additional_analysis}, which introduces $\xi_i$ to bound the information gain of the current set of evaluations. Line (14) follows from the definition of $\xi_i,$ which was stated in Equation \ref{eq:xi}. Line (15) is a consequence of the fact that $\xi_i$ is increasing with $i.$ Line (16) follows from application of the Cauchy-Schwarz inequality, and line (17) is a consequence of Lemma \ref{lem:MIG}, which bounds the sums of the posterior variances by the MIG term.

Recall that $\sum_{\tau=1}^t \sum_{i\in V_k}\mathbb{E}[f(x^*)- f(x_{\tau,i})]=S1+S2+S3.$ Therefore, $R_{AB,k}(t)\leq \frac{\sqrt{2}\pi^{3/2}}{12t|V_k|} +  \sqrt{\frac{2\xi_{|V_k|} \beta_t\Psi_{t|V_k|}}{t|V_k|\log(1+\sigma_{\epsilon}^{-2})}}.$ 

Equipped with a bound on $R_{AB,k}(t),$ we can revisit our expression for $R_{AB}(t).$

\begin{align*}
R_{AB}(t)&=\frac{1}{M}\sum_{k=1}^n |V_k|R_{AB,k}(t)\\
&\leq \frac{1}{M}\sum_{k=1}^n |V_k|
\left(\frac{C_1}{t|V_k|} +  \sqrt{\frac{C_2\xi_{|V_k|} \beta_t\Psi_{t|V_k|}}{t|V_k|}}\right),
\end{align*}

\noindent where $C_1=\frac{\sqrt{2}\pi^{3/2}}{12}$ and $C_2=\frac{2}{\log(1+\sigma_\epsilon^{-2})}.$ Thus, we have shown that $R_{AB}(t)\leq \frac{1}{M}\sum_{k=1}^n |V_k|
\left(\frac{C_1}{t|V_k|} +  \sqrt{\frac{C_2\xi_{|V_k|} \beta_t\Psi_{t|V_k|}}{t|V_k|}}\right),$ concluding our proof.
\end{proof}

By picking $n$ to be the clique cover number of the graph $G$, Theorem \ref{theorem:rab} yields the following corollary.

\begin{corollary}
    \label{corollary:avg_regret}
    Suppose $k(x,x') \leq 1$ for all $x, x'$. Let $\theta(G)$ and $\omega(G)$ denote the clique cover number and clique number of the graph $G$ respectively. Then, the Bayesian average regret after $t$ timesteps satisfies \newline
    $R_{AB}(t) \leq \frac{C_1 \theta(G)}{Mt} + \frac{\sqrt{\theta(G)} \sqrt{C_2 \xi_{\omega(G)} \beta_t \Psi_{t \omega(G)}}}{\sqrt{Mt}},$
    where $\beta_t, C_1$ and $C_2$ are as defined in Theorem \ref{theorem:rsb}.
\end{corollary}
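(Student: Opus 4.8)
The plan is to instantiate Theorem~\ref{theorem:rab} with the specific choice $n = \theta(G)$, taking $\{G_k\}_{k=1}^{\theta(G)}$ to be an optimal clique cover of $G$: a partition of the vertex set $V$ into $\theta(G)$ parts $V_k$, each of which induces a complete subgraph. Such a collection exists by the very definition of the clique cover number, and it satisfies the hypotheses of Theorem~\ref{theorem:rab} (the $V_k$ are pairwise disjoint, each $G_k$ is complete, and $\bigcup_k V_k = V$). Two elementary consequences will be used repeatedly: since each $V_k$ is a clique, $|V_k| \le \omega(G)$ for every $k$, and since the $V_k$ partition $V$, $\sum_{k=1}^{\theta(G)} |V_k| = M$. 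With this choice the first term of the bound from Theorem~\ref{theorem:rab} collapses immediately: $\frac{1}{M}\sum_{k=1}^{\theta(G)} |V_k|\cdot\frac{C_1}{t|V_k|} = \frac{1}{M}\sum_{k=1}^{\theta(G)}\frac{C_1}{t} = \frac{C_1\theta(G)}{Mt}$, matching the first term of the corollary.

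For the second term I would first cancel one factor of $\sqrt{|V_k|}$ within each summand, rewriting $|V_k|\sqrt{\frac{C_2\xi_{|V_k|}\beta_t\Psi_{t|V_k|}}{t|V_k|}} = \sqrt{|V_k|}\cdot\sqrt{\frac{C_2\xi_{|V_k|}\beta_t\Psi_{t|V_k|}}{t}}$. Next I would invoke monotonicity: $\xi_n$ is nondecreasing in $n$ (as used in line (15) of the proof of Theorem~\ref{theorem:rab}) and $\Psi_n$ is nondecreasing in $n$ (it is a maximum over a larger family of point sets), so $\xi_{|V_k|}\le\xi_{\omega(G)}$ and $\Psi_{t|V_k|}\le\Psi_{t\omega(G)}$ for every $k$. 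Pulling the now $k$-independent factor out of the sum leaves $\frac{1}{M}\sqrt{\frac{C_2\xi_{\omega(G)}\beta_t\Psi_{t\omega(G)}}{t}}\sum_{k=1}^{\theta(G)}\sqrt{|V_k|}$. Finally I would apply Cauchy--Schwarz in the form $\sum_{k=1}^{\theta(G)}\sqrt{|V_k|}\le\sqrt{\theta(G)}\,\sqrt{\sum_{k=1}^{\theta(G)}|V_k|} = \sqrt{\theta(G)M}$, which turns the second term into exactly $\frac{\sqrt{\theta(G)}\sqrt{C_2\xi_{\omega(G)}\beta_t\Psi_{t\omega(G)}}}{\sqrt{Mt}}$. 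Summing the two bounds gives the claimed inequality.

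This argument is essentially a sequence of simplifications, so there is no deep obstacle; the only points requiring care are (i) invoking the monotonicity of $\xi_n$ and $\Psi_n$ to replace the per-clique sizes $|V_k|$ by the global clique number $\omega(G)$, which is the step that loosens the bound, and (ii) applying Cauchy--Schwarz in the direction $\sum\sqrt{|V_k|}\le\sqrt{n}\,\sqrt{\sum|V_k|}$ (not the reverse) together with the partition identity $\sum_k |V_k| = M$. One could stop before step (i) and keep the tighter quantity $\frac{1}{M}\sum_k\sqrt{|V_k|}\sqrt{C_2\xi_{|V_k|}\beta_t\Psi_{t|V_k|}/t}$ inside the bound; the corollary deliberately trades this sharpness for a form depending only on $\theta(G)$ and $\omega(G)$.
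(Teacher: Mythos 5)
Your proposal is correct and follows essentially the same route as the paper's own (much terser) proof: instantiate Theorem \ref{theorem:rab} with an optimal clique cover of size $n=\theta(G)$, bound $|V_k|\leq\omega(G)$ inside the square root, and apply Cauchy--Schwarz to $\sum_{k}\sqrt{|V_k|}\leq\sqrt{\theta(G)M}$. Your explicit justification of the monotonicity of $\xi_n$ and $\Psi_n$ is a detail the paper leaves implicit, but it is the same argument.
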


\begin{proof}
The proof of Corollary \ref{corollary:avg_regret} follows from (i) applying Cauchy-Schwarz to bound the term $\sum_{k=1}^n \sqrt{|V_k|} \leq \sqrt{n}\sqrt{\sum_{k=1}^n |V_k|} = \sqrt{Mn}$, (ii) picking $n$ to be the clique cover number of $G$, $\theta(G)$, and (iii) the fact that for any clique $G_k = (V_k,E_k)$ in $G$, $|V_k| \leq \omega(G)$, since $\omega(G)$ denotes the clique number of $G$ (i.e. size of the largest clique in $G$).
\end{proof}

From Corollary \ref{corollary:avg_regret}, the average regret satisfies $R_{AB}(t) = \tilde{O}\left(\frac{\sqrt{\theta(G) \xi_{\omega(G)} \Psi_{t \omega(G)}}}{\sqrt{Mt}}\right)$ (recall $\omega(G)$ denotes the clique number of $G$). We note that the term $\Psi_{t\omega(G)}$ corresponds to the maximal mutual information gain from $t\omega(G)$ observations, and that this quantity depends only logarithmically on $t\omega(G)$ for standard kernels such as the squared exponential kernel. For more details, see Appendix \ref{appendix:Psi_bounds}. The term $\xi_{\omega(G)}$ is the price we pay for the absence of coordination within each of the subgraphs $G_k$ in $G$, and is a standard term that arises in multi-agent Bayesian optimization. By an appropriate initialization phase, this term can be reduced to $\tilde{O}(1),$ (see Appendix B.3 in \cite{ren2024minimizing}). Thus, compared to the sequential single-agent case with $t$ rounds which has average regret $\tilde{O}\left(\sqrt{\frac{1}{t}} \right)$ \cite{kakade}, our algorithm satisfies a regret of $\tilde{O}\left(\frac{\sqrt{\theta(G)}}{\sqrt{Mt}}\right)$, i.e. an improvement of $\sqrt{\frac{\theta(G)}{M}}$ (note this term is always smaller than 1). Correspondingly, the average regret is smaller for graphs with higher connectivity, whose clique cover number $\theta(G)$ is smaller. We next proceed to bound the Bayesian simple regret.

\begin{figure*}[ht!]
    \centering
    \begin{subfigure}[b]{0.225\textwidth}
    \includegraphics[width=1\textwidth]{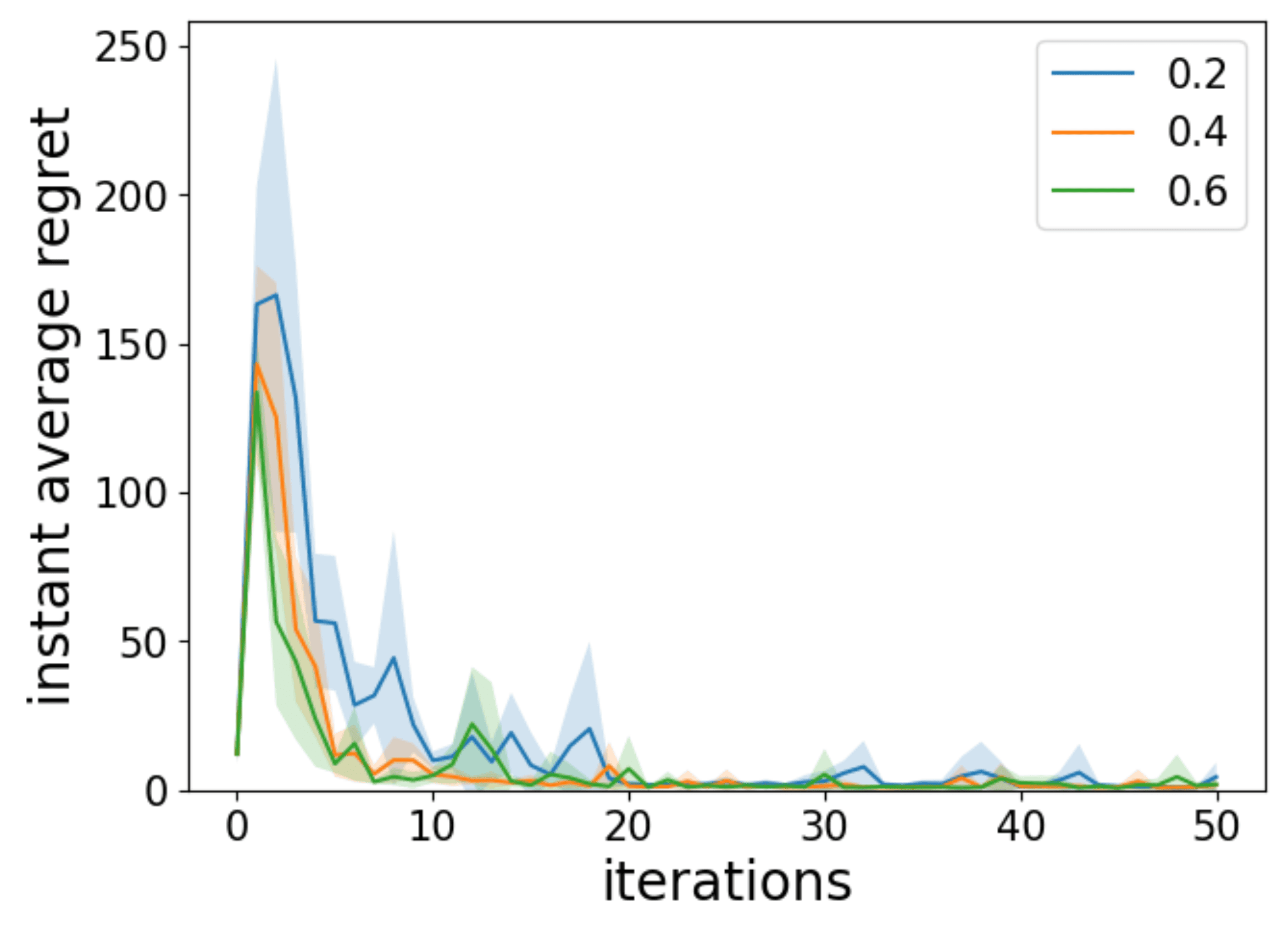}
        \label{fig:obj1a}
    \end{subfigure}
    \begin{subfigure}[b]{0.225\textwidth}
        \includegraphics[width=\textwidth]{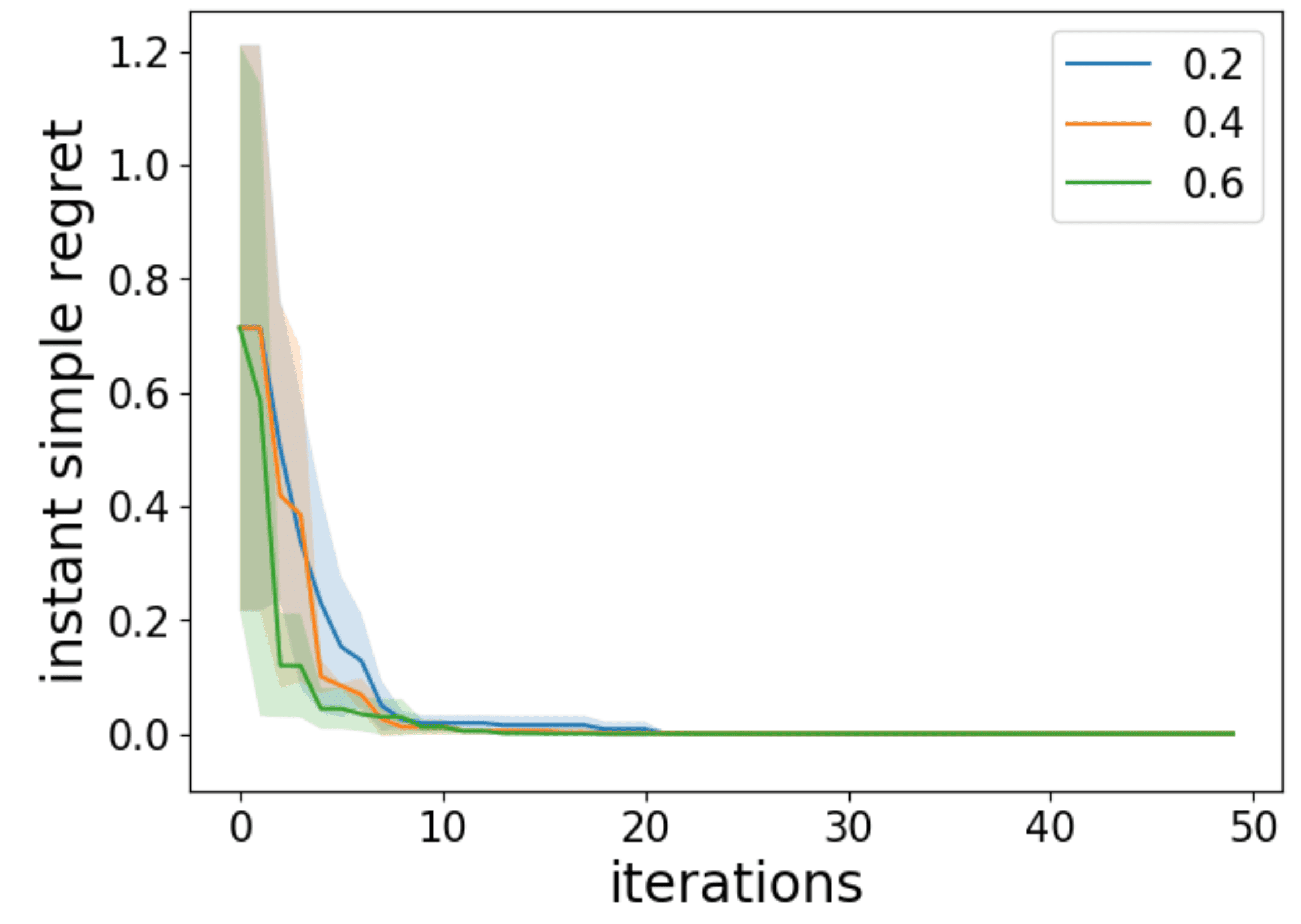}
        \label{fig:obj1b}
    \end{subfigure}
    \hfill
    \begin{subfigure}[b]{0.225\textwidth}
        \includegraphics[width=\textwidth]{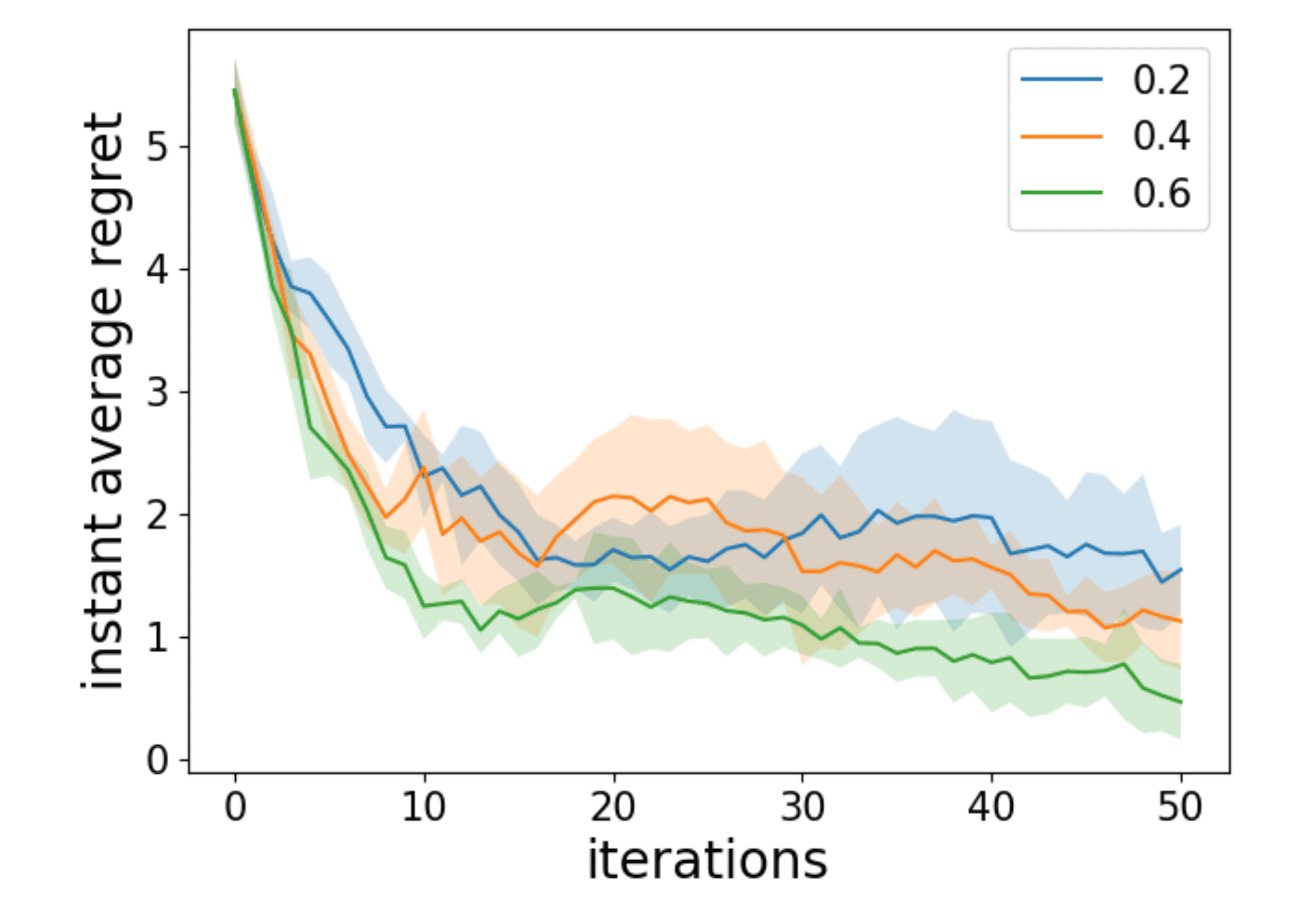}
        \label{fig:obj1c}
    \end{subfigure}
    \begin{subfigure}[b]{0.225\textwidth}
        \includegraphics[width=\textwidth]{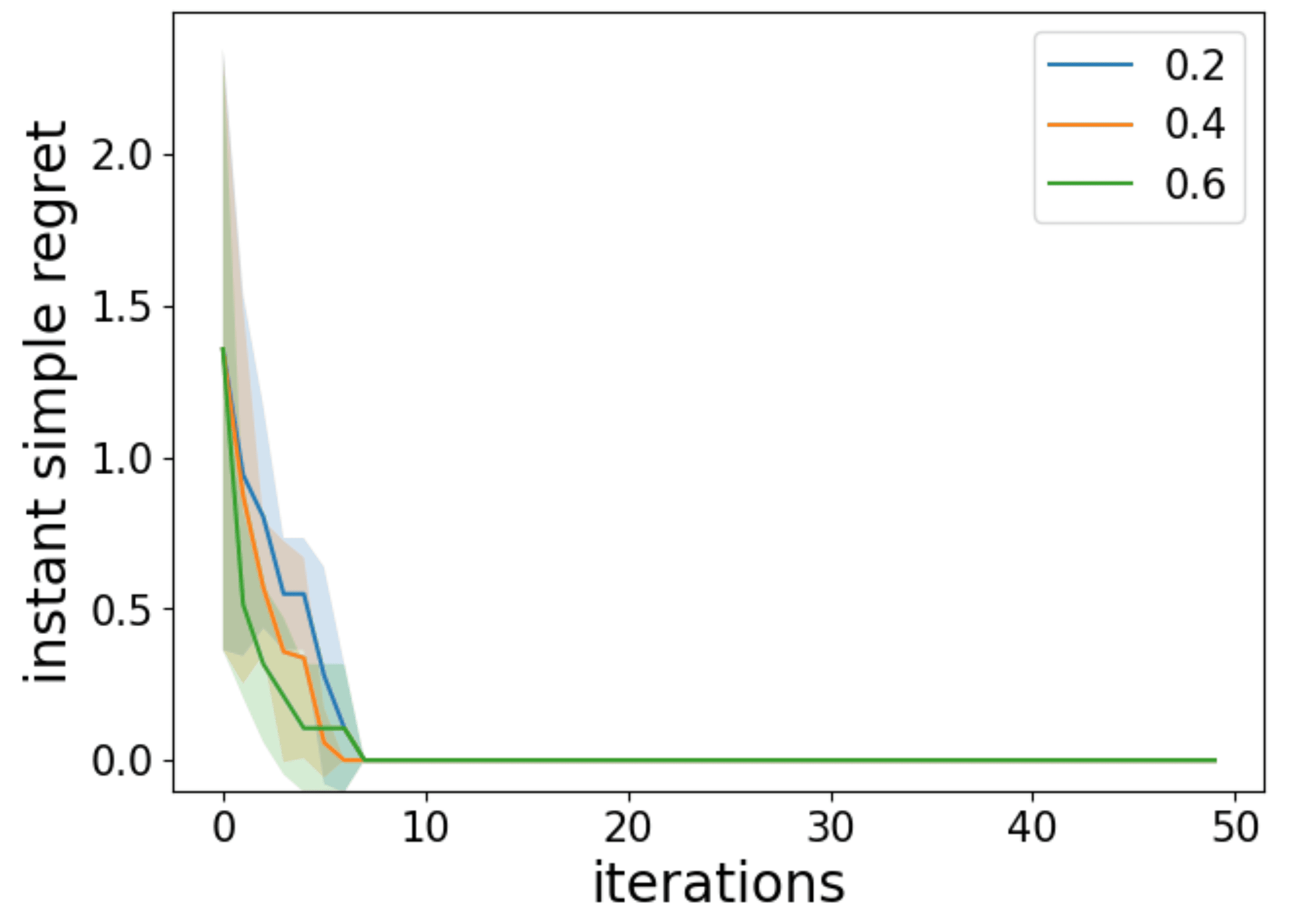}
        \label{fig:obj1d}
    \end{subfigure}

    \begin{subfigure}[b]{0.225\textwidth}
        \includegraphics[width=\textwidth]{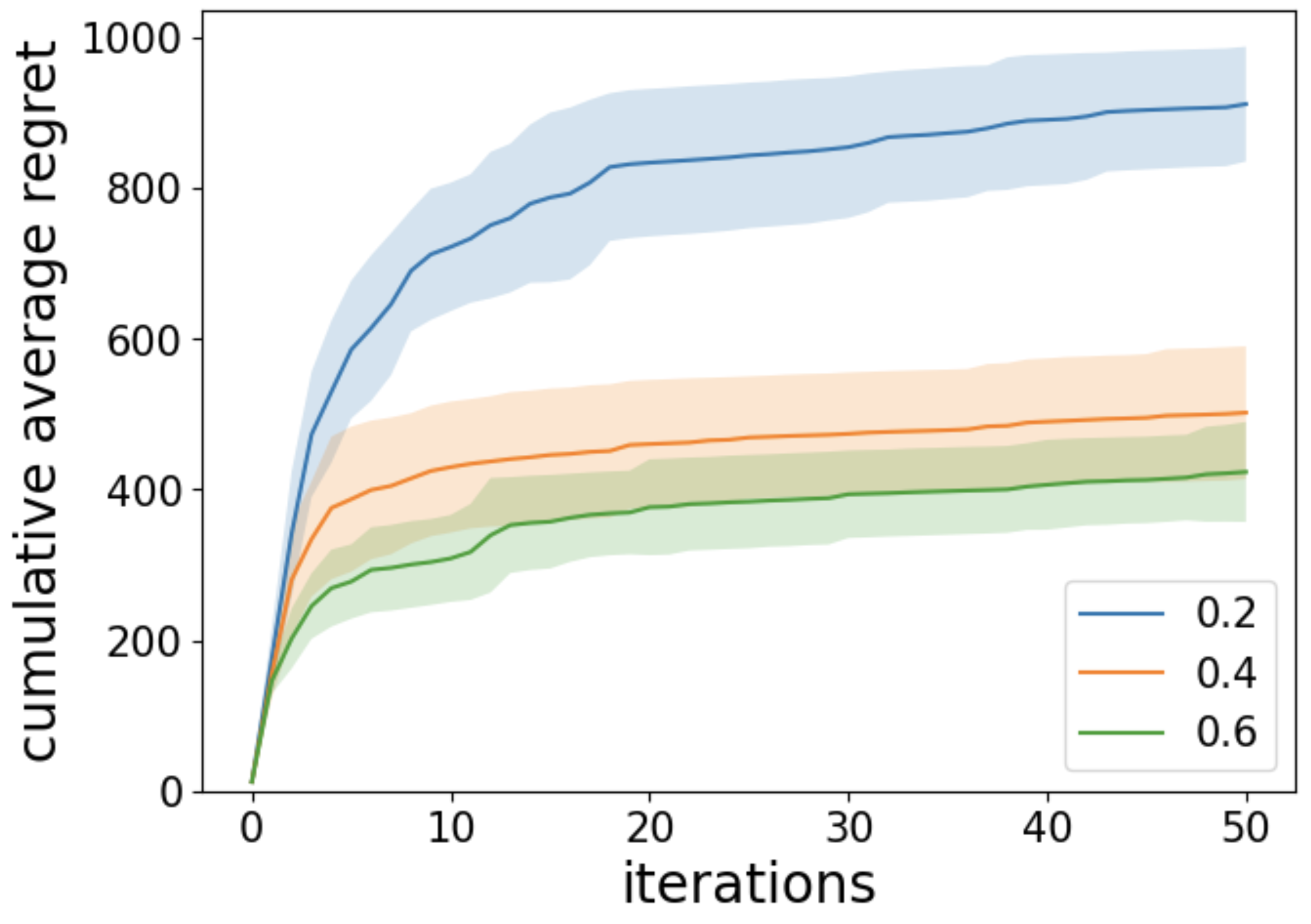}
        \label{fig:obj2a}
    \end{subfigure}
    \begin{subfigure}[b]{0.225\textwidth}
        \includegraphics[width=\textwidth]{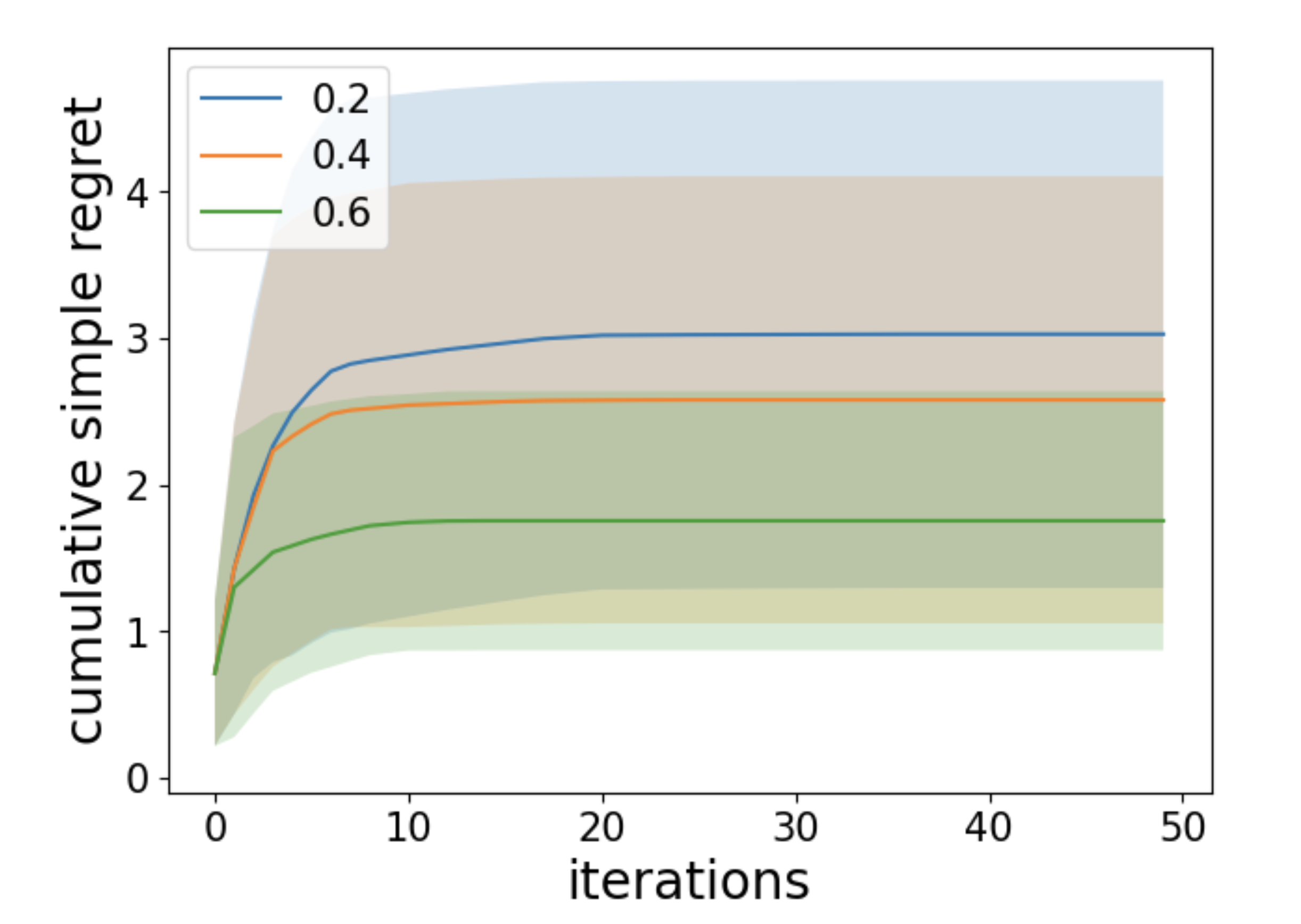}
        \label{fig:obj2b}
    \end{subfigure}
    \hfill
    \begin{subfigure}[b]{0.225\textwidth}
        \includegraphics[width=\textwidth]{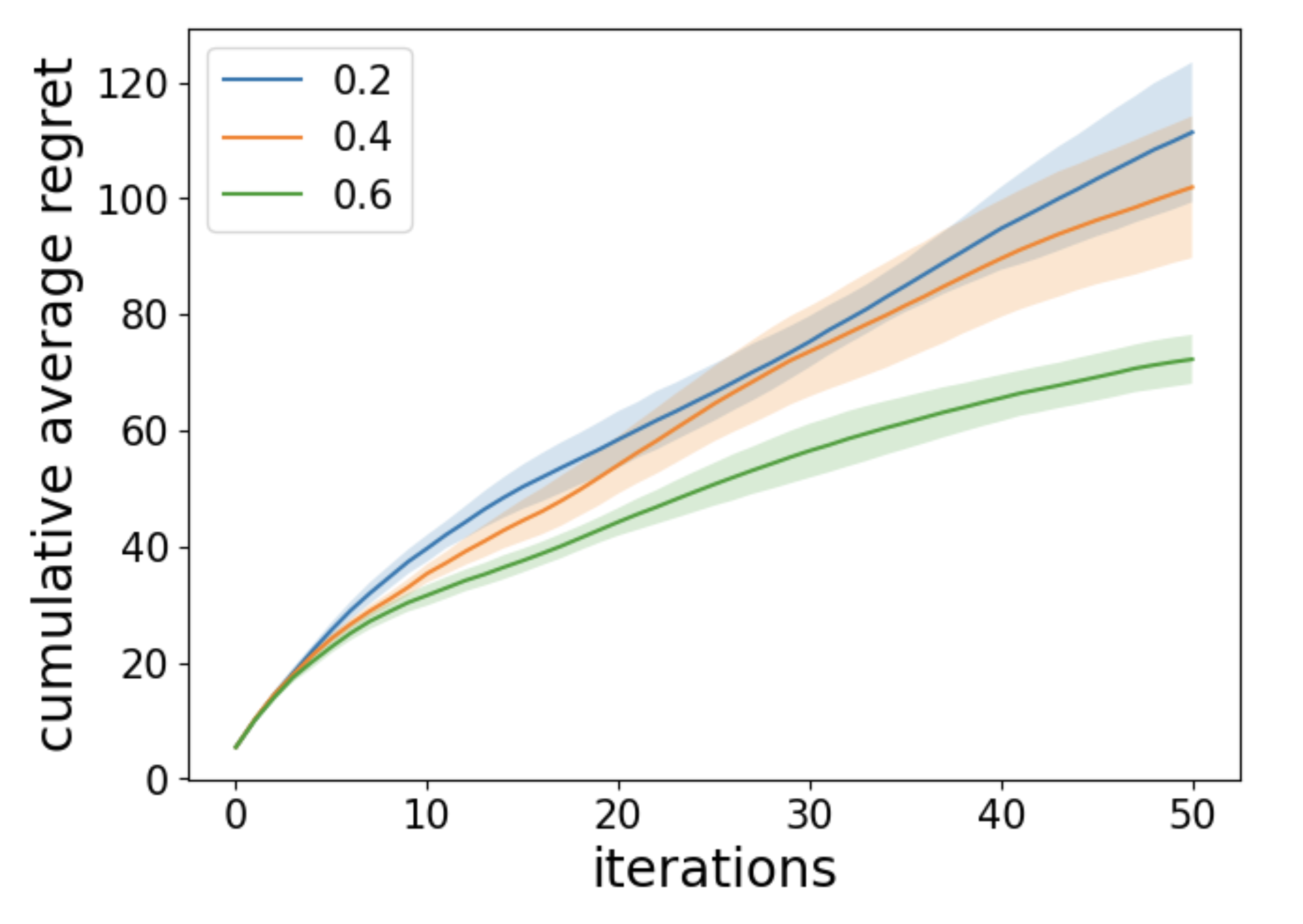}
        \label{fig:obj2c}
    \end{subfigure}
    \begin{subfigure}[b]{0.225\textwidth}
        \includegraphics[width=\textwidth]{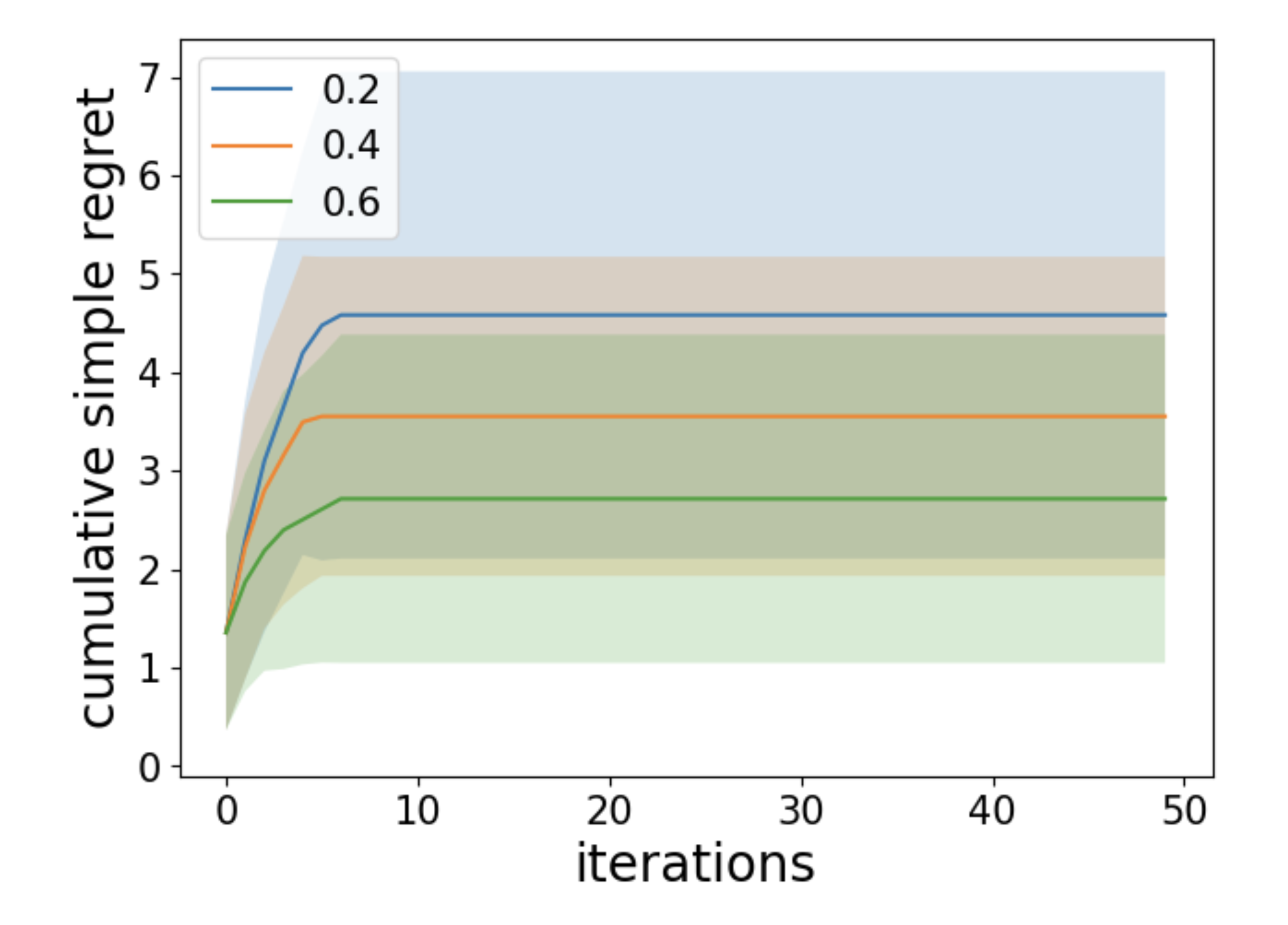}
        \label{fig:obj2d}
    \end{subfigure}
    \caption{Regret analysis of numerical simulations with $20$ agents on Erdős-Rényi random graphs with connectivity probability $0.2$ (blue), $0.4$ (orange), and $0.6$ (green), on Rosenbrock (left) and Ackley (right) objective functions.}
    \label{fig}
\end{figure*}

\begin{theorem}
\label{theorem:rsb}
Suppose $k(x,x')\leq 1$ for all $x,x'.$ Let $G_s=(V_s,E_s)$ be a complete subgraph of $G.$ Then the Bayesian simple regret after $t$ timesteps satisfies $R_{SB}(t)\leq \frac{C_1}{t|V_s|}+\sqrt{\frac{C_2\xi_{|V_s|}\beta_t\Psi_{t|V_s|}}{t|V_s|}},$ where $\beta_t=2\log(t^2|V_s||\mathcal{X}|),$ $C_1=\frac{\sqrt{2}\pi^{3/2}}{12},$ and $C_2=\frac{2}{\log(1+\sigma_\epsilon^{-2})}.$
\end{theorem}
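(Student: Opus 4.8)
The plan is to reduce the Bayesian simple regret to the Bayesian average regret restricted to a single complete subgraph, and then reuse the machinery already developed in the proof of Theorem \ref{theorem:rab}. First I would observe that for any fixed complete subgraph $G_s = (V_s, E_s)$ of $G$, the minimum in the definition \eqref{eq:sb} of $R_{SB}(t)$ is taken over a set that contains all pairs $(\tau,i)$ with $i \in V_s$ and $\tau \in \{1,\dots,t\}$. Hence the minimum over this larger index set is no larger than the average over the sub-collection indexed by $V_s$ and $\{1,\dots,t\}$, giving
\begin{equation*}
R_{SB}(t) \;\leq\; \frac{1}{t|V_s|}\sum_{\tau=1}^t \sum_{i\in V_s} \mathbb{E}[f(x^*) - f(x_{\tau,i})] \;=\; R_{AB,s}(t),
\end{equation*}
where $R_{AB,s}(t)$ is exactly the per-subgraph Bayesian average regret quantity $R_{AB,k}(t)$ from the proof of Theorem \ref{theorem:rab}, specialized to the clique $G_s$.

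Next I would invoke the bound on $R_{AB,k}(t)$ established midway through the proof of Theorem \ref{theorem:rab}: for any complete subgraph $G_k$ of $G$,
\begin{equation*}
R_{AB,k}(t) \;\leq\; \frac{C_1}{t|V_k|} + \sqrt{\frac{C_2 \,\xi_{|V_k|}\, \beta_t \,\Psi_{t|V_k|}}{t|V_k|}}.
\end{equation*}
Applying this with $k$ replaced by $s$ yields precisely the claimed bound
\begin{equation*}
R_{SB}(t) \;\leq\; \frac{C_1}{t|V_s|} + \sqrt{\frac{C_2 \,\xi_{|V_s|}\, \beta_t \,\Psi_{t|V_s|}}{t|V_s|}}.
\end{equation*}
The one bookkeeping point to check is that the constant $\beta_t$ here is $2\log(t^2 |V_s| |\mathcal{X}|)$ rather than $2\log(t^2 M |\mathcal{X}|)$ as in Theorem \ref{theorem:rab}: since we only ever work inside the single clique $G_s$, the union bound over agents in the analysis of the $S1$ term (lines (6)--(8) of the earlier proof) ranges over $i \in V_s$, of which there are $|V_s|$, not $M$; so the smaller choice of $\beta_t$ suffices to make the $S1$ sum telescope to $\frac{\sqrt{2}\pi^{3/2}}{12}$. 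I would re-derive the $S1$, $S2$, $S3$ decomposition exactly as before but with $V_k = V_s$ throughout, noting that $S2 = 0$ by the Thompson-sampling posterior-matching argument and $S3$ is controlled by the information-theoretic chain (lines (9)--(17)) with $|V_k|$ replaced by $|V_s|$.

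The main obstacle, if any, is purely expository: one must be careful that the first inequality (min $\leq$ average over a sub-index-set) is valid, which requires that $G_s$ is a genuine subgraph of $G$ so that each agent $i \in V_s$ actually runs Algorithm \ref{alg:TSReg} with $V_s \subseteq N(i) \cup \{i\}$ — this is what makes $\bar D_\tau \subseteq D_{\tau,i}$ and hence lets the $S3$ bound go through verbatim. No genuinely new estimate is needed; the theorem is essentially a corollary of the internal steps of Theorem \ref{theorem:rab} together with the elementary observation that a minimum is dominated by any average.
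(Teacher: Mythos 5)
Your proposal is correct and follows essentially the same route as the paper: bound the minimum by the average over the clique $V_s$, then invoke the per-clique average-regret bound $R_{AB,k}(t)$ established inside the proof of Theorem \ref{theorem:rab}. Your extra remark that the union bound in the $S1$ term only needs to range over the $|V_s|$ agents of the clique, justifying the smaller constant $\beta_t = 2\log(t^2|V_s||\mathcal{X}|)$, is a bookkeeping point the paper's own proof glosses over, and is worth keeping.
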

\begin{proof}
 
Our proof begins by observing that the Bayesian simple regret is bounded by the Bayesian average regret of any subset of agents. 
Similar to the proof for Theorem \ref{theorem:rab}, we upper bound Bayesian average regret on a specific subset of agents by decomposing regret into three sums, each of which utilize the confidence function $U_{t,i}.$ The bounds follow notions from information theory using techniques from Kandasamy et al. \cite{parallelizedbo}, incorporating the information gain constants introduced in Equations \ref{eq:MIG} and \ref{eq:xi}.

Recall from equation \ref{eq:sb} that $R_{SB}(t)=\min_{i\in\{1,2,\ldots,M\},\tau \in \{1,2,\ldots,t\}}\mathbb{E}[f(x^*)- f(x_{\tau,i})].$ Define $U_{t,i}(x)=\mu_{D_{t,i}}(x)+\beta_t^{1/2}\sigma_{D_{t,i}}(x).$ Note that, since $R_{SB}(t)$ represents a minimum value of the expression $\mathbb{E}[f(x^*)- f(x_{\tau,i})],$ it is upper bounded by the average of this expression across any subset of agents. Thus, we may write
$$R_{SB}(t)\leq \frac{1}{t|V_s|}\sum_{\tau=1}^t\sum_{i\in V_s}\mathbb{E}[f(x^*)- f(x_{\tau,i})].$$ 

Thus, it suffices to upper bound $\frac{1}{t|V_s|}\sum_{\tau=1}^t\sum_{i\in V_s}\mathbb{E}[f(x^*)- f(x_{\tau,i})].$ Note that this expression is equal to the Bayesian average regret over $|V_s|$ agents on the complete subgraph $G_s=(V_s,E_s).$ By Theorem \ref{theorem:rab}, we can bound Bayesian average regret; thus, we may write $\frac{1}{t|V_s|}\sum_{\tau=1}^t\sum_{i\in V_s}\mathbb{E}[f(x^*)- f(x_{\tau,i})]\leq \frac{\sqrt{2}\pi^{3/2}}{12t|V_s|}+\sqrt{\frac{2\xi_{|V_s|}\beta_t\Psi_{t|V_s|}}{t|V_s|\log(1+\sigma_{\epsilon}^{-2})}}.$ Because $R_{SB}(t)\leq \frac{1}{t|V_s|}\sum_{\tau=1}^t\sum_{i\in V_s}\mathbb{E}[f(x^*)- f(x_{\tau,i})],$ $R_{SB}(t)$ inherits this bound. Therefore, we have shown $R_{SB}(t)\leq \frac{C_1}{t|V_s|}+\sqrt{\frac{C_2\xi_{|V_s|}\beta_t\Psi_{t|V_s|}}{t|V_s|}},$ where $\beta_t=2\log(t^2|V_s||\mathcal{X}|),$ $C_1=\frac{\sqrt{2}\pi^{3/2}}{12},$ and $C_2=\frac{2}{\log(1+\sigma_\epsilon^{-2})},$ concluding our proof.
\end{proof}

Picking $G_s$ to be the largest complete subgraph of the communication network $G$ then yields the following corollary. 

\begin{corollary}
\label{corollary:main}
Suppose $k(x,x')\leq 1$ for all $x,x'.$ Let $G_{max}=(V_{max},E_{max})$ be the largest complete subgraph of $G.$ Then the Bayesian simple regret after $t$ timesteps satisfies $R_{SB}(t)\leq \frac{C_1}{t|V_{max}|}+\sqrt{\frac{C_2\xi_{|V_{max}|}\beta_t\Psi_{t|V_{max}|}}{t|V_{max}|}},$ where $\beta_t=2\log(t^2|V_{max}||\mathcal{X}|),$ $C_1=\frac{\sqrt{2}\pi^{3/2}}{12},$ and $C_2=\frac{2}{\log(1+\sigma_\epsilon^{-2})}.$
\end{corollary}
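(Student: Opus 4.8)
The plan is to derive Corollary~\ref{corollary:main} as an immediate specialization of Theorem~\ref{theorem:rsb}. The theorem already establishes that for \emph{any} complete subgraph $G_s = (V_s, E_s)$ of the communication graph $G$, the Bayesian simple regret obeys $R_{SB}(t) \leq \frac{C_1}{t|V_s|} + \sqrt{\frac{C_2 \xi_{|V_s|} \beta_t \Psi_{t|V_s|}}{t|V_s|}}$ with $\beta_t = 2\log(t^2 |V_s| |\mathcal{X}|)$. Since this bound holds for every choice of complete subgraph, it holds in particular for $G_{max} = (V_{max}, E_{max})$, the complete subgraph of $G$ with the largest vertex set. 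Substituting $V_s = V_{max}$ directly into the conclusion of Theorem~\ref{theorem:rsb} yields exactly the claimed inequality, with $\beta_t, C_1, C_2$ taking their stated forms.

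The only thing worth a sentence of justification is \emph{why} one should pick $G_{max}$ rather than any other clique: the right-hand side of the Theorem~\ref{theorem:rsb} bound is decreasing in $|V_s|$ up to the (at most) logarithmic growth of $\Psi_{t|V_s|}$ and the growth of $\xi_{|V_s|}$ in $|V_s|$. Ignoring those slowly-varying factors, the dominant dependence is $\tilde{O}\!\left(\sqrt{\frac{1}{t|V_s|}}\right)$, so the tightest bound obtainable from Theorem~\ref{theorem:rsb} comes from taking $|V_s|$ as large as possible, i.e.\ $|V_s| = \omega(G) = |V_{max}|$. I would state this as the motivation for the choice, then simply invoke Theorem~\ref{theorem:rsb} with $G_s = G_{max}$.

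There is essentially no obstacle here; the corollary is a direct instantiation. If anything, the only subtlety is making sure $G_{max}$ is well-defined, which it is: $G$ has at least one vertex (indeed $M \geq 1$ agents), and every single vertex is trivially a complete subgraph, so the collection of complete subgraphs of $G$ is nonempty and finite, hence has a member of maximal size. One could also remark that a complete subgraph on $|V_{max}|$ vertices is precisely a maximum clique of $G$, so $|V_{max}| = \omega(G)$, connecting the statement to the clique number used in Corollary~\ref{corollary:avg_regret}, but this is cosmetic. The proof therefore consists of: (i) note the existence of $G_{max}$; (ii) apply Theorem~\ref{theorem:rsb} with $G_s = G_{max}$; (iii) read off the constants.

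\begin{proof}
Since $G$ has finitely many vertices and every singleton vertex set induces a (trivially) complete subgraph, the collection of complete subgraphs of $G$ is nonempty and finite; hence there exists a complete subgraph $G_{max} = (V_{max}, E_{max})$ of $G$ whose vertex set $V_{max}$ has maximal cardinality among all complete subgraphs of $G$ (equivalently, $|V_{max}| = \omega(G)$). Applying Theorem~\ref{theorem:rsb} with the choice $G_s = G_{max}$ gives immediately
$$R_{SB}(t) \leq \frac{C_1}{t|V_{max}|} + \sqrt{\frac{C_2 \xi_{|V_{max}|} \beta_t \Psi_{t|V_{max}|}}{t|V_{max}|}},$$
where $\beta_t = 2\log(t^2 |V_{max}| |\mathcal{X}|)$, $C_1 = \frac{\sqrt{2}\pi^{3/2}}{12}$, and $C_2 = \frac{2}{\log(1+\sigma_\epsilon^{-2})}$, which is the claimed bound. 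This concludes the proof.
\end{proof}
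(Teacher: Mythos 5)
Your proof is correct and matches the paper's approach exactly: the paper derives this corollary simply by "picking $G_s$ to be the largest complete subgraph of $G$" in Theorem \ref{theorem:rsb}, which is precisely your instantiation $G_s = G_{max}$. Your added remarks on the existence of $G_{max}$ and why maximizing $|V_s|$ gives the tightest bound are sound but not needed.
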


From the above corollary and our discussion following Corollary \ref{corollary:avg_regret}, we see that $R_{SB}(t) = \tilde{O}\left(\sqrt{\frac{1} {t |V_{max}|}}\right)$.  Thus, compared to the sequential single-agent case with $t$ rounds which has simple regret $\tilde{O}\left(\sqrt{\frac{1}{t}} \right)$ \cite{kakade}, our algorithm satisfies a regret of $\tilde{O}\left(\sqrt{\frac{1}{t |V_{max}|}} \right)$, i.e. an improvement of $\sqrt{\frac{1}{|V_{max}|}}$, demonstrating the benefit of the network structure for the simple regret case as well. 

\section{Numerical Experiments}
\subsection{Simulation}

In the numerical implementation, performance was assessed utilizing the following regret metrics. We define the \emph{Instant average regret} $R_A$, and its sum, $\overline{R_A}$, as follows:

\begin{equation*}
R_A(t)=\frac{1}{M}\sum_{i=1}^M \left(f^*-f(x_{t,i})\right),\hspace{.25cm} \overline{R_A}(t)=\sum_{\tau=1}^t R_A(\tau).
\end{equation*}
\normalsize

We also define the \emph{Instant simple regret} $R_S$, and its sum $\overline{R_S}$, as follows:
\vspace{-.25cm}
\begin{equation*}
R_S(t)=f^*-\max_{\substack{i\in\{1,2,\ldots M\}, \\{\tau\in \{1,2,\ldots,t\}}}} f(x_{t,i}),\hspace{.25cm} \overline{R_S}(t)=\sum_{\tau=1}^t R_S(\tau),
\end{equation*}

where $f^*=\max_{x\in \mathcal{X}} f(x).$ Numerical results were constructed in a Python implementation built upon the BOTorch package \cite{botorch}. The code used to generate the simulation and corresponding Figure \ref{fig} is available at \href{https://github.com/sabzer/distributed-bo}{https://github.com/sabzer/distributed-bo}. The Gaussian processes utilized the Matérn kernel with parameter $\nu=\frac{5}{2}.$ The numerical simulations were run over $T=50$ timesteps. Simulations were run based on two test functions for the objective function: Ackley, which has many local maxima and one global minima in the origin, and Rosenbrock, which contains a large valley in which the global minima is situated. The equations of the aforementioned objective functions and their plots are available in Appendix \ref{app:obj}. Since we were solving a maximization problem, we multiplied the canonical definitions of these functions by $-1$ for the purpose of our simulation. For the communication networks in our simulations, we used Erdős-Rényi random graphs of $20$ agents with connectivities of $0.2,$ $0.4,$ and $0.6$ \cite{ER}. The connection probabilities are the probability that each edge from the complete graph of $20$ agents appears in the corresponding random graph.

\subsection{Discussion}

Our theoretical result bounds Bayesian average regret, $R_{AB}(t),$ and Bayesian simple regret, $R_{SB}(t),$ with the bound dependent on the structure of the communication network between agents. Our distributed Thompson sampling algorithm was able to achieve the extrema of the Ackley and Rosenbrock objective functions in numerical implementation, and thus is effective at the Bayesian optimization task. Our theoretical results suggests that the distributed Thompson sampling algorithm implementation favors highly connected communication graphs. This is apparent from a lower Bayesian average regret bound when the communication graph can be decomposed into a few large disjoint complete subgraphs, and a lower Bayesian simple regret bound when the largest complete subgraph of the communication graph has a larger number of agents. Our numerical results support this intuition, for in Figure \ref{fig}, we see better regret convergence for Erdős-Rényi graphs of higher connectivity. This result holds for both Ackley and Rosenbrock objective functions, and for both Instant simple and average regret.

\section{Conclusion}
In this paper, we proposed a distributed Thompson sampling algorithm to address the multi-agent Bayesian optimization problem under constrained communication. We develop bounds on Bayesian average regret and Bayesian simple regret for this approach, where the bound is dependent on properties of the largest complete subgraph of the graph encoding communication structure between agents. With our bound, we show that in connected multi-agent communication networks, both Bayesian average regret and Bayesian simple regret will converge faster with distributed Thompson sampling than in the sequential single-agent case, with the same number of rounds. Additionally, we demonstrate the efficacy of our algorithm with regret analysis on optimization test functions, illustrating faster convergence with well connected communication graphs. Future work will focus on developing a tighter regret bound, and further tailoring the distributed Thompson sampling algorithm towards the constrained communication case by leveraging the data communicated between agents.
\bibliography{arXiv_Final_v1} 
\bibliographystyle{ieeetr}
\newpage
\onecolumn
\appendices

\section{Additional Analysis}
\label{appendix:additional_analysis}
\begin{lemma}\label{lem:ething} (\cite{parallelizedbo}\cite{russovanroy})
At step $j,$ for all $x\in \mathcal{A},$ $\mathbb{E}[\mathbb{I}\{f(x)>U_j(x)\}\cdot (f(x)-U_j(x))]\leq\frac{1}{\sqrt{2\pi}}e^{-\beta_j/2}.$
\end{lemma}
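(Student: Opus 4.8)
The plan is to reduce the statement to a one-dimensional computation about the standard normal distribution by conditioning on the data available at step $j$. Write $\mathcal{F}_{D_j}$ for the $\sigma$-algebra generated by the observations defining the posterior at step $j$. Conditioned on $\mathcal{F}_{D_j}$, the posterior of $f(x)$ is $N(\mu_{D_j}(x),\sigma_{D_j}^2(x))$, so I would introduce the standardized variable $Z=(f(x)-\mu_{D_j}(x))/\sigma_{D_j}(x)$, which is $N(0,1)$ under this conditioning. Since $U_j(x)=\mu_{D_j}(x)+\beta_j^{1/2}\sigma_{D_j}(x)$, the event $\{f(x)>U_j(x)\}$ is exactly $\{Z>\beta_j^{1/2}\}$, and on this event $f(x)-U_j(x)=\sigma_{D_j}(x)(Z-\beta_j^{1/2})$. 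Note also that $U_j(x)$ is $\mathcal{F}_{D_j}$-measurable, so it is legitimate to pull $\sigma_{D_j}(x)$ out of the conditional expectation.

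Next I would evaluate the conditional expectation $\mathbb{E}\big[\mathbb{I}\{Z>c\}(Z-c)\big]$ with $c:=\beta_j^{1/2}\geq 0$. Splitting the integral and using the elementary identity $\int_c^\infty z\,\phi(z)\,dz=\phi(c)$, where $\phi$ is the standard normal density, this equals $\phi(c)-c\,(1-\Phi(c))$. Because $c\geq 0$ and $1-\Phi(c)\geq 0$, the subtracted term is nonnegative, so this is at most $\phi(c)=\tfrac{1}{\sqrt{2\pi}}e^{-c^2/2}=\tfrac{1}{\sqrt{2\pi}}e^{-\beta_j/2}$.

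It then remains to control the factor $\sigma_{D_j}(x)$ that was pulled out: here I would invoke the standing assumption $k(x,x')\leq 1$, since the GP posterior variance never exceeds the prior variance, $\sigma_{D_j}^2(x)=k_{D_j}(x,x)\leq k(x,x)\leq 1$, hence $\sigma_{D_j}(x)\leq 1$. Combining, $\mathbb{E}\big[\mathbb{I}\{f(x)>U_j(x)\}(f(x)-U_j(x))\mid\mathcal{F}_{D_j}\big]=\sigma_{D_j}(x)\big(\phi(c)-c(1-\Phi(c))\big)\leq\tfrac{1}{\sqrt{2\pi}}e^{-\beta_j/2}$, and taking expectations over $\mathcal{F}_{D_j}$ via the tower property yields the claim. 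The only real subtlety is the bookkeeping around the conditioning — ensuring $Z$ is genuinely standard normal and $U_j(x)$ measurable — together with noting that $c=\beta_j^{1/2}$ is nonnegative so the truncated-mean bound applies; the remaining Gaussian integral is routine.
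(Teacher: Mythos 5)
Your proof is correct and follows essentially the same route as the paper's: condition on the data at step $j$, observe that $f(x)-U_j(x)$ is Gaussian with mean $-\beta_j^{1/2}\sigma_{D_j}(x)$, apply the truncated-Gaussian mean bound, and use $\sigma_{D_j}(x)\le k(x,x)\le 1$. The only difference is that you explicitly derive the bound $\mathbb{E}[\mathbb{I}\{Z>c\}(Z-c)]=\phi(c)-c(1-\Phi(c))\le\phi(c)$ via standardization, whereas the paper simply quotes the corresponding property of the normal distribution.
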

\begin{proof}
Since $f$ is a $GP,$ we know $f(x)|D_j\sim \mathcal{N}(\mu_{j}(x),\sigma_{j}^2(x)).$ Recall that $U_j(\cdot)=\mu_{j}(\cdot)+\beta^{1/2}_{j}\sigma_{j}(\cdot).$ Thus, we know that $f(x)-U_j(x)|D_j\sim \mathcal{N}(-\beta^{1/2}_{j}\sigma_{j}(x),\sigma_{j}^2(x)).$ For a normal distribution $Z\sim \mathcal{N}(\mu,\sigma^2),$ with $\mu\leq 0,$ we have that $\mathbb{E}[Z\mathbb{I}(Z>0)]\leq\frac{\sigma}{\sqrt{2\pi}}e^{-\mu^2/(2\sigma^2)}.$ Thus, we can apply this fact to our setting, yielding: 
\begin{align*}
\mathbb{E}[\mathbb{I}\{f(x)>U_j(x)\}\cdot (f(x)-U_j(x))]&\leq\frac{\sigma_j(x)}{\sqrt{2\pi}} e^{-\beta_j/2} &\text{(Aforementioned property of } \mathcal{N}\text{)}\\
&\leq \frac{1}{\sqrt{2\pi}}e^{-\beta_j/2} &\text{(} \sigma_j(x)\leq \kappa(x,x)\leq 1\text{)}
\end{align*}
\end{proof}

 \begin{lemma}\label{lem:5.3}\cite{kakade}
The information gain for selected points can be expressed in terms of the predictive variances. if $f_{[n]}=(f(x_n))\in \mathbb{R}^n:$ $$I(y_{[n]};f_{[n]})=\frac12\sum_{j=1}^n \log(1+\sigma_\epsilon^{-2}\sigma_{j-1}^2(x_n))$$
\end{lemma}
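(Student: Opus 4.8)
The plan is to establish the identity by induction on $n$, exploiting the chain rule for mutual information together with the fact that conditioning Gaussians on observations of $f$ reduces to the standard GP update. First I would write, using the chain rule, $I(y_{[n]};f_{[n]}) = \sum_{j=1}^n I(y_j; f_{[n]} \mid y_{[j-1]})$, and then argue that each term equals $I(y_j; f(x_j) \mid y_{[j-1]})$ — that is, revealing the value $f(x_j)$ (plus noise) gives no more information about the whole vector $f_{[n]}$ than it does about the single coordinate $f(x_j)$, because $y_j$ depends on $f_{[n]}$ only through $f(x_j)$ and the rest of $f_{[n]}$ is conditionally independent of $y_j$ given $f(x_j)$ and $y_{[j-1]}$. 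Concretely, the Markov chain $(f_{[n]} \setminus f(x_j)) \to f(x_j) \to y_j$ holds conditionally on $y_{[j-1]}$.

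Next I would evaluate a single term $I(y_j; f(x_j) \mid y_{[j-1]})$. Conditioned on the first $j-1$ observations, the GP posterior gives $f(x_j) \mid y_{[j-1]} \sim \mathcal{N}(\mu_{j-1}(x_j), \sigma_{j-1}^2(x_j))$, and $y_j = f(x_j) + \epsilon_j$ with $\epsilon_j \sim \mathcal{N}(0,\sigma_\epsilon^2)$ independent. For a scalar Gaussian signal-plus-independent-Gaussian-noise channel, the mutual information between signal and observation is $\tfrac12 \log\!\big(1 + \sigma_\epsilon^{-2}\,\sigma_{j-1}^2(x_j)\big)$; this is the standard Gaussian channel capacity formula, obtained from $I = h(y_j \mid y_{[j-1]}) - h(y_j \mid f(x_j), y_{[j-1]}) = \tfrac12\log\!\big(2\pi e(\sigma_{j-1}^2(x_j)+\sigma_\epsilon^2)\big) - \tfrac12\log(2\pi e\,\sigma_\epsilon^2)$. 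Summing over $j$ gives the claimed expression. (I note the index on $\sigma$ in the statement — the right-hand side should read $\sigma_{j-1}^2(x_j)$, i.e. the posterior variance at $x_j$ after the first $j-1$ points; the $x_n$ in the excerpt appears to be a typo for $x_j$.)

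The main obstacle — really the only nonroutine point — is justifying the reduction $I(y_j; f_{[n]} \mid y_{[j-1]}) = I(y_j; f(x_j) \mid y_{[j-1]})$ rigorously, i.e. verifying the conditional-independence structure of the joint Gaussian so that the chain-rule terms collapse coordinate-wise. Once that is in place, the per-step computation is the textbook Gaussian channel calculation and the sum telescopes immediately. An alternative to the inductive/chain-rule route would be a direct determinant computation: $I(y_{[n]}; f_{[n]}) = \tfrac12 \log \det\!\big(I + \sigma_\epsilon^{-2} K_n\big)$ where $K_n = [k(x_a,x_b)]_{a,b \le n}$, and then factor $\det(I + \sigma_\epsilon^{-2}K_n)$ as a product over the sequential Schur complements, each Schur complement being exactly $\sigma_{j-1}^2(x_j)$; this yields the same product form. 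Either way the lemma follows; I would present the chain-rule version as it is the cleanest and matches how $\sigma_{j-1}^2$ is defined in the GP update.
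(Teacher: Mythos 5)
Your proposal is correct and follows essentially the same route as the paper: both reduce to the entropy chain rule for $H(y_{[n]})$ together with the Gaussian predictive law $y_j \mid y_{[j-1]} \sim \mathcal{N}(\mu_{j-1}(x_j), \sigma_{j-1}^2(x_j) + \sigma_\epsilon^2)$ and the fact that the conditional entropy given $f$ is just the noise entropy, so the per-step terms are the standard Gaussian-channel quantities $\tfrac12\log(1+\sigma_\epsilon^{-2}\sigma_{j-1}^2(x_j))$. The only organizational difference is that the paper writes $I = H(y_{[n]}) - H(y_{[n]}\mid f_{[n]})$ first (which makes your Markov-chain reduction step unnecessary, since $H(y_{[n]}\mid f_{[n]})$ is directly the i.i.d.\ noise entropy), and you are right that the $x_n$ in the statement is a typo for $x_j$.
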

\begin{proof}
\begin{align*}
I(y_{[n]};f_{[n]})&=H(y_{[n]})+H(y_{[n]}|f_{[n]})&\text{(Definition of information)}\\
&=H(y_{[n]})-\frac{1}{2}\log|2\pi e \sigma_\epsilon^2\mathbf{I}|&\text{(Gaussian entropy: } H(N(\mu,\Sigma))=\frac{1}{2}\log |2\pi e\Sigma|\text{)}
\end{align*}
Now let's develop an expression for $H(y_{[n]}):$
\begin{align*}
H(y_{[n]})&=H(y_{[n-1]}) + H(y_{n}|y_{[n-1]}) &\text{(Entropy chain rule: } H(A,B)=H(A)+H(B|A)\text{)}\\\\
&=H(y_{[n-1]})+\frac12\log(2\pi e (\sigma_\epsilon^2+\sigma^2_{n-1}(x_n))/2 &\text{(} y_n|y_{[n-1]}=f(x_n)+\epsilon_n, y_n|y_{[n-1]}\sim N(\mu_{n-1}(x_n),\sigma^2_{n-1}(x_n) +\sigma_\epsilon^2{)}
\end{align*}
We now have developed a recursive relation for $H(y_{[n]});$ we can inductively show that $H(y_{[n]})=\sum_{j=1}^n\frac12\log(2\pi e (\sigma_\epsilon^2+\sigma^2_{j-1}(x_n))/2.$ Utilizing this expression, we can return to our mutual information expression:
\begin{align*}
I(y_{[n]};f_{[n]})&=H(y_{[n]})-\frac{1}{2}\log|2\pi e \sigma_\epsilon^2\mathbf{I}|&\text{(Previously shown)}\\
&=\sum_{j=1}^n\frac12\log(2\pi e (\sigma_\epsilon^2+\sigma^2_{j-1}(x_n))/2-\frac{1}{2}\log|2\pi e \sigma_\epsilon^2\mathbf{I}|&\text{(Substituting developed } H(y_{[n]}\text{ expression)}\\
&=\frac12\sum_{j=1}^n \log(1+\sigma_{j-1}^2(x_n)\sigma_\epsilon^{-2}) &\text{(Simplifying log subtraction)}
\end{align*}
\end{proof}

\begin{lemma}\label{lem:MIG}\cite{parallelizedbo}
Let $f\sim GP(0,\kappa),f:\mathcal{X}\to\mathbb{R}$ and each time we query $x\in\mathbb{X}$ we observe $y=f(x)+\varepsilon,$ where $\varepsilon\sim \mathcal{N}(0,\sigma_n^2).$ Let $\{x_1,\ldots,x_t\}$ be an arbitrary set of $t$ evaluations to $f$ where $x_j\in \mathcal{X}$ for all $j.$ Let $\sigma_{j}^2$
 denote the posterior variance conditioned on the first $j$ of these queries, $\{x_1,\ldots,x_{j}\}.$ Then, $\sum_{j=1}^n \sigma_{j-1}^2(x_j)\leq \frac{2\Psi_n}{\log(1+\sigma_\epsilon^{-2})}.$
 \end{lemma}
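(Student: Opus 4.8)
The plan is to reduce the left-hand sum, term by term, to a sum of log-determinant–type quantities, identify that sum with a mutual information via Lemma~\ref{lem:5.3}, and then invoke the definition of the MIG in \eqref{eq:MIG}. First I would record the pointwise bound $\sigma_{j-1}^2(x_j)\le \kappa(x_j,x_j)\le 1$: conditioning on more data never increases the posterior variance, so $\sigma_{j-1}^2(x_j)\le\sigma_0^2(x_j)=\kappa(x_j,x_j)$, and the hypothesis $\kappa(x,x')\le 1$ bounds this by $1$. This normalization is exactly what makes the next step go through.

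The analytic core is the scalar inequality: for every $s\in[0,1]$, $s^2 \le \frac{1}{\log(1+\sigma_\epsilon^{-2})}\log\!\bigl(1+\sigma_\epsilon^{-2}s^2\bigr)$. I would prove it by substituting $v=\sigma_\epsilon^{-2}s^2\in[0,\sigma_\epsilon^{-2}]$ and using that $v\mapsto v/\log(1+v)$ is nondecreasing on $(0,\infty)$ — which follows since $\tfrac{d}{dv}\bigl(\log(1+v)-\tfrac{v}{1+v}\bigr)=\tfrac{v}{(1+v)^2}\ge 0$ forces $\log(1+v)\ge \tfrac{v}{1+v}$ and hence the numerator of the derivative of $v/\log(1+v)$ to be nonnegative. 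Thus $v/\log(1+v)\le \sigma_\epsilon^{-2}/\log(1+\sigma_\epsilon^{-2})$, and rearranging and multiplying through by $\sigma_\epsilon^2$ gives the claim. Applying it with $s^2=\sigma_{j-1}^2(x_j)$ (legitimate by the previous paragraph) and summing over $j=1,\dots,n$ yields $\sum_{j=1}^n\sigma_{j-1}^2(x_j)\le \frac{2}{\log(1+\sigma_\epsilon^{-2})}\cdot\frac12\sum_{j=1}^n\log\!\bigl(1+\sigma_\epsilon^{-2}\sigma_{j-1}^2(x_j)\bigr)$.

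Finally I would identify the trailing half-sum as a mutual information and bound it by the MIG. By Lemma~\ref{lem:5.3}, $\frac12\sum_{j=1}^n\log\!\bigl(1+\sigma_\epsilon^{-2}\sigma_{j-1}^2(x_j)\bigr)=I(y_{[n]};f_{[n]})$, the information gained about $f$ from the $n$ noisy observations at $x_1,\dots,x_n$; and since $\{x_1,\dots,x_n\}$ is one particular subset of $\mathcal{X}$ of cardinality $n$, the definition \eqref{eq:MIG} gives $I(y_{[n]};f_{[n]})\le \Psi_n$. Chaining the two displayed bounds produces $\sum_{j=1}^n\sigma_{j-1}^2(x_j)\le \frac{2\Psi_n}{\log(1+\sigma_\epsilon^{-2})}$, as required. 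I expect the only nonroutine point to be verifying the monotonicity of $v/\log(1+v)$ (equivalently the elementary inequality $\log(1+v)\ge v/(1+v)$); the remainder is bookkeeping and direct appeals to Lemma~\ref{lem:5.3} and the definition of $\Psi_n$.
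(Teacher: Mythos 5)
Your proposal is correct and follows essentially the same route as the paper: both hinge on the monotonicity of $v\mapsto v/\log(1+v)$ to convert each $\sigma_{j-1}^2(x_j)$ into $\tfrac{1}{\log(1+\sigma_\epsilon^{-2})}\log(1+\sigma_\epsilon^{-2}\sigma_{j-1}^2(x_j))$, then identify the resulting sum with $I(y_{[n]};f_{[n]})$ via Lemma~\ref{lem:5.3} and bound it by $\Psi_n$. The only difference is presentational (you argue from the left-hand side up to $\Psi_n$ rather than from $\Psi_n$ down), and you supply the derivative check of the monotonicity claim that the paper merely asserts.
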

 \begin{proof}
\begin{align*}
\Psi_n&\geq I(y_{[n]};f_{[n]})&\text{(Definition of } \Psi_n \text{)}\\
&\geq \frac12 \sum_{j=1}^n \log(1+\sigma_\epsilon^{-2}\sigma_{j-1}^2(x_j))&\text{(Lemma \ref{lem:5.3})}
\end{align*}
Note that the function $\frac{x}{1+\log(x)}$ increases with $x.$ Also note that $0\leq \sigma_\epsilon^2,\sigma_j^2(x_j)\leq 1$ by assumption, and thus $\sigma_\epsilon^{-2}\sigma_{j}^2(x_j)\leq \sigma_\epsilon^{-2}.$ Therefore, $$\frac{\sigma_\epsilon^{-2}\sigma_{j-1}^2(x_j)}{\log(1+\sigma_\epsilon^{-2}\sigma_{j-1}^2(x_j))}\leq \frac{\sigma_\epsilon^{-2}}{\log(1+\sigma_\epsilon^{-2})}\implies \sigma_{j-1}^2(x_j)\log(1+\sigma_{\epsilon}^{-2})\leq \log(1+\sigma_\epsilon^{-2}\sigma_{j-1}^2(x_j)).$$
Now we can revisit our earlier $\Psi_n$ expression:
\begin{align*}
\Psi_n&\geq \frac12 \sum_{j=1}^n \log(1+\sigma_\epsilon^{-2}\sigma_{j-1}^2(x_j))&\text{(Lemma \ref{lem:5.3})}\\
&\geq \frac12\log(1+\sigma_{\epsilon}^{-2})\sum_{j=1}^n \sigma_{j-1}^2(x_j) &\text{(Previously shown)}
\end{align*}
With algebraic manipulation, the last expression is equivalent to $\sum_{j=1}^n \sigma_{j-1}^2(x_j)\leq \frac{2\Psi_n}{\log(1+\sigma_\epsilon^{-2})},$ our desired statement.
 \end{proof}
 \begin{lemma}\label{lem:xi}
 Let $f\sim GP(0,\kappa),$ and let $A,B$ be finite subsets of $\mathcal{X}.$ Let $y_A\in\mathbb{R}^{|A|}$ and $y_B\in \mathbb{R}^{|B|}$ denote the observations when we evaluate $f$ at $A$ and $B.$ Let $\sigma_A,\sigma_{A\cup B}:\mathcal{X}\to\mathbb{R}$ denote the posterior standard deviation of the $GP$ when conditioned on $A$ and $A\cup B,$ respectively. Then $$\forall x\in\mathcal{X},\hspace{1cm}\frac{\sigma_A(x)}{\sigma_{A\cup B}(x)}= exp(I(f;y_B|y_A))$$
 \end{lemma}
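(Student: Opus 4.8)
The plan is to express both posterior standard deviations via the information-theoretic identity of Lemma \ref{lem:5.3}, and then observe that the ratio of variances telescopes into a single mutual information term. First I would fix $x \in \mathcal{X}$ and recall that for a GP, the posterior variance at $x$ conditioned on any finite set $S$ of evaluations is deterministic given the locations in $S$ (it does not depend on the observed values). Concretely, augment the conditioning set by the single pseudo-point $x$: by Lemma \ref{lem:5.3} applied to the ordered list consisting of the points of $A$ followed by $x$, the mutual information $I(f(x); y_x \mid y_A) = \tfrac12 \log\bigl(1 + \sigma_\epsilon^{-2}\sigma_A^2(x)\bigr)$ — more precisely, I would use the noiseless analogue $I(f(x); f(x)\mid y_A)$ is infinite, so instead I would work directly with the chain rule for differential entropy of the Gaussian vector $(y_A, f(x))$ rather than routing through the noisy observation identity.

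The cleaner route: for jointly Gaussian variables, $I(f(x); y_B \mid y_A) = h(f(x)\mid y_A) - h(f(x) \mid y_A, y_B)$, where $h$ denotes differential entropy. Since $f(x)\mid y_A \sim \mathcal{N}(\mu_A(x), \sigma_A^2(x))$ and $f(x)\mid y_{A\cup B} \sim \mathcal{N}(\mu_{A\cup B}(x), \sigma_{A\cup B}^2(x))$, the Gaussian entropy formula $h(\mathcal{N}(\mu,\sigma^2)) = \tfrac12\log(2\pi e\,\sigma^2)$ gives
\[
I(f(x); y_B \mid y_A) = \tfrac12 \log\bigl(2\pi e\,\sigma_A^2(x)\bigr) - \tfrac12\log\bigl(2\pi e\,\sigma_{A\cup B}^2(x)\bigr) = \log\frac{\sigma_A(x)}{\sigma_{A\cup B}(x)}.
\]
Exponentiating both sides yields the claim. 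I should double-check the direction of conditioning in the mutual information: $I(f(x); y_B \mid y_A)$ is symmetric in its first two arguments, and since conditioning on more data can only decrease the posterior variance of $f(x)$, the ratio $\sigma_A(x)/\sigma_{A\cup B}(x) \geq 1$, consistent with a nonnegative mutual information.

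The main obstacle is a bookkeeping one rather than a conceptual one: justifying that $f(x)$, which is one coordinate of the infinite-dimensional GP, behaves like an ordinary Gaussian random variable so that the finite-dimensional entropy chain rule applies, and being careful that the mutual information in the statement is between $f$ (the whole process, or at least $f(x)$) and the observation vector $y_B$ — here I would note that $I(f; y_B \mid y_A) = I(f(x); y_B\mid y_A)$ is false in general, so the intended reading must be that the lemma's left-hand side is evaluated pointwise, i.e. the relevant quantity is $I(f(x); y_B \mid y_A)$, matching how Line (13) of the Theorem \ref{theorem:rab} proof uses it (there the $\sigma_{\tau,i}(x_{\tau,i})$ are pointwise posterior standard deviations). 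With that reading fixed, the proof is just the two-line entropy computation above plus the Gaussian-entropy formula already invoked in Lemma \ref{lem:5.3}.
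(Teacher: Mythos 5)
Your ``cleaner route'' is exactly the paper's own proof: write the conditional mutual information as a difference of conditional (differential) entropies, apply the Gaussian entropy formula to $f(x)\mid y_A$ and $f(x)\mid y_{A\cup B}$, and the $2\pi e$ factors cancel to leave $\log\bigl(\sigma_A(x)/\sigma_{A\cup B}(x)\bigr)$. Your added remark that the identity must be read pointwise --- as $I(f(x);y_B\mid y_A)$ rather than mutual information with the whole process $f$, since the left-hand side depends on $x$ while a process-level right-hand side would not --- is a correct and worthwhile clarification that the paper's proof leaves implicit.
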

 \begin{proof}
 \begin{align*}
 I(f;y_B|y_A)&=H(f|y_A)-H(f|y_{A\cup B})&\text{(Mutual information with conditional entropy)}\\
 &=\frac12\log |2\pi e\sigma_A^2|-\frac12\log |2\pi e\sigma_{A\cup B}^2|&\text{(Entropy for GP)}\\
 &=\log \left(\frac{\sigma_A}{\sigma_{A\cup B}}\right)&\text{}
 \end{align*}
 Thus,  $\frac{\sigma_A}{\sigma_{A\cup B}}=exp(I(f;y_B|y_A))$
 \end{proof}

\section{Bounds for the information gain quantity $\Psi_{\tau}$ for different kernels}\label{appendix:Psi_bounds}

We note that following a known result in \cite{kakade}, $\Psi_{\tau}$ in fact satisfies sublinear growth for three well-known classes of kernels, namely the linear, exponential and Matern kernels.

\begin{lemma}[cf. Theorem 5 in \cite{kakade}]
    \label{lemma:MIG_kernel_bounds}
For any $\tau > 0$, the maximal information gain $\Psi_{\tau}$ can be bounded as follows for the following kernels.
    \begin{enumerate}
        \item (Linear kernel): If $k(x,x')=x^\top x'$, then $$\Psi_{\tau} = O(d \log (\tau)).$$
        \item (Squared exponential kernel): If $k(x,x')=\exp(-\|x-x'\|^2/2)$, then
        $$\Psi_{\tau} = O(\left(\log(\tau)\right)^{d+1}).$$
        \item (Matern kernel with $\nu > 1$): If $k(x,x')= \frac{1}{\Gamma(\nu)2^{\nu-1}} \left( \frac{\sqrt{2\nu}}{d} \|x-x'\| \right)^v K_v\left( \frac{\sqrt{2v}}{d} \|x-x'\| \right)$,
where  $K_v(\cdot)$ is a modified Bessel function, and $\Gamma(\cdot)$ denotes the gamma function, then
        $$\Psi_{\tau} = O( (\tau)^{\frac{d(d+1)}{2\nu + d(d+1)}}\log(\tau))$$
    \end{enumerate}
\end{lemma}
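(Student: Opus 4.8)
The plan is to reduce $\Psi_\tau$ to a spectral quantity of the kernel integral operator and then substitute the known eigenvalue–decay rate of each kernel family; this is exactly the route of Theorem~5 in \cite{kakade}, so I only sketch the structure. \emph{Step 1 (information gain as a log-determinant).} For any $D=\{x_1,\dots,x_\tau\}\subset\mathcal{X}$, chaining the identity of Lemma~\ref{lem:5.3} over the $\tau$ points gives $I(f;y_D)=\tfrac12\log\det(I+\sigma_\epsilon^{-2}K_D)$ with $K_D=[k(x_i,x_j)]_{i,j}$, so that
$$\Psi_\tau=\max_{|D|=\tau}\tfrac12\sum_{s=1}^{\tau}\log\!\big(1+\sigma_\epsilon^{-2}\hat\lambda_s(D)\big),$$
where $\hat\lambda_1(D)\ge\cdots\ge\hat\lambda_\tau(D)\ge0$ are the eigenvalues of $K_D$.

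\emph{Step 2 (operator eigenvalues and a head/tail split).} By Mercer's theorem on the compact domain $\mathcal{X}$, the empirical eigenvalues $\hat\lambda_s(D)$ are controlled, in an averaged sense after discretizing $\mathcal{X}$, by the eigenvalues $\lambda_1\ge\lambda_2\ge\cdots$ of the operator $g\mapsto\int_{\mathcal{X}}k(\cdot,x)g(x)\,d\mu(x)$. Splitting the index set at a cutoff $T_*$, bounding the first $T_*$ terms by $O(T_*\log\tau)$ and the remainder by $O\!\big(\tau\,\sigma_\epsilon^{-2}\sum_{s>T_*}\lambda_s\big)$ — this, together with the discretization bookkeeping, is the content of Theorem~8 in \cite{kakade} — one obtains
$$\Psi_\tau=O\Big(\min_{T_*}\big(T_*\log\tau+\tau\,\sigma_\epsilon^{-2}\!\!\sum_{s>T_*}\lambda_s\big)\Big).$$

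\emph{Step 3 (kernel-specific decay and optimization of $T_*$).} It remains to quote the spectra and optimize the cutoff. The linear-kernel operator has rank at most $d$, so $\sum_{s>d}\lambda_s=0$ and $T_*=d$ gives $\Psi_\tau=O(d\log\tau)$. The squared-exponential kernel has super-exponentially decaying eigenvalues $\lambda_s=O(B^{s^{1/d}})$ with $B<1$, so the tail is negligible once $T_*=\Theta\big((\log\tau)^d\big)$, yielding $\Psi_\tau=O\big((\log\tau)^{d+1}\big)$. The Mat\'ern-$\nu$ kernel has polynomially decaying eigenvalues $\lambda_s=O\big(s^{-(2\nu+d)/d}\big)$, so the tail sum is $O\big(\tau\,T_*^{-2\nu/d}\big)$; balancing this against $T_*\log\tau$ and carrying through the extra polynomial-in-$\tau$ factor coming from the discretization of $\mathcal{X}$ produces the stated exponent $\tfrac{d(d+1)}{2\nu+d(d+1)}$.

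\emph{Main obstacle.} The delicate part is the Mat\'ern case in Steps~2–3: one must use the asymptotics of the Mat\'ern spectral density (its Bessel structure) to pin down the $s^{-(2\nu+d)/d}$ eigenvalue decay, and then track the domain-discretization factor inside Theorem~8 of \cite{kakade} in order to arrive at the precise exponent $\tfrac{d(d+1)}{2\nu+d(d+1)}$ rather than the naive balance $\tfrac{d}{2\nu+d}$. All of these ingredients are established in \cite{kakade}, so in practice the proof amounts to a direct invocation of that reference specialized to the three kernels.
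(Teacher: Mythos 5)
The paper offers no proof of this lemma at all---it is imported verbatim from Theorem~5 of \cite{kakade}---and your proposal ultimately does the same thing, invoking that reference after sketching its internal argument (log-determinant form of the information gain, the head/tail eigenvalue split of Theorem~8 there, and the kernel-specific spectral decay rates, including the correct observation that the Mat\'ern exponent $\tfrac{d(d+1)}{2\nu+d(d+1)}$ arises from the discretization factor rather than the naive balance). Your sketch is an accurate account of the cited proof, so it matches the paper's treatment.
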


\section{Objective Functions}\label{app:obj}
The test objective functions used in our simulations were the Rosenbrock and Ackley functions. The equations of those functions are as follows:

\textit{Rosenbrock}: \begin{equation}f(x, y) = (1 - x)^2 + 100(y - x^2)^2\end{equation}

\textit{Ackley}: \begin{equation}f(x, y) = -20 \exp\left( -0.2 \sqrt{\frac{x^2 + y^2}{2}} \right) - \exp\left( \frac{1}{2} (\cos(2 \pi x) + \cos(2 \pi y)) \right) + 20 + \exp(1)
\end{equation}

\begin{figure}[H]
    \centering
    \begin{subfigure}{0.45\textwidth}
        \centering
        \includegraphics[width=\linewidth]{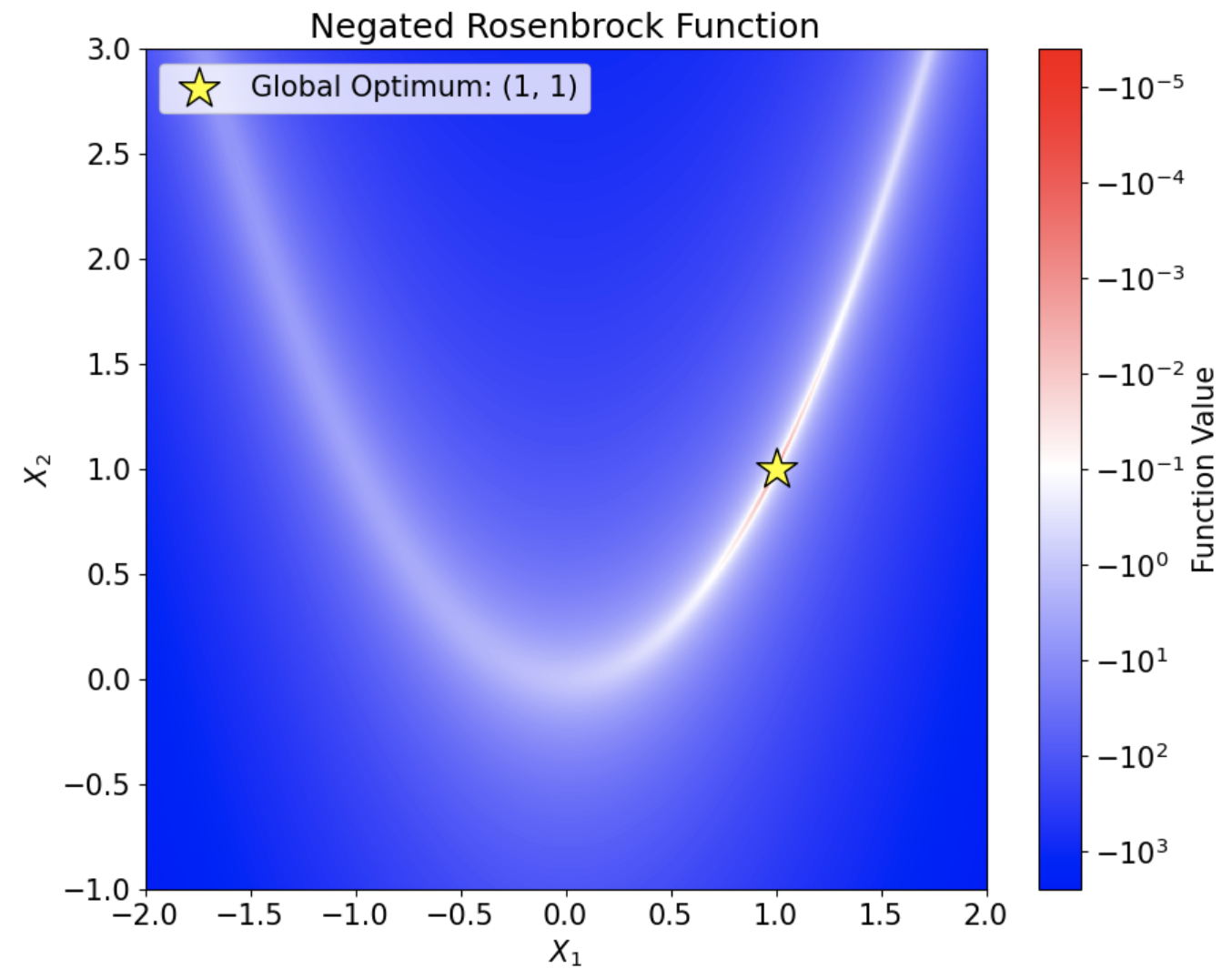}
        \caption{Plot of Rosenbrock Function.}
    \end{subfigure}
    \hfill
    \begin{subfigure}{0.45\textwidth}
        \centering
        \includegraphics[width=\linewidth]{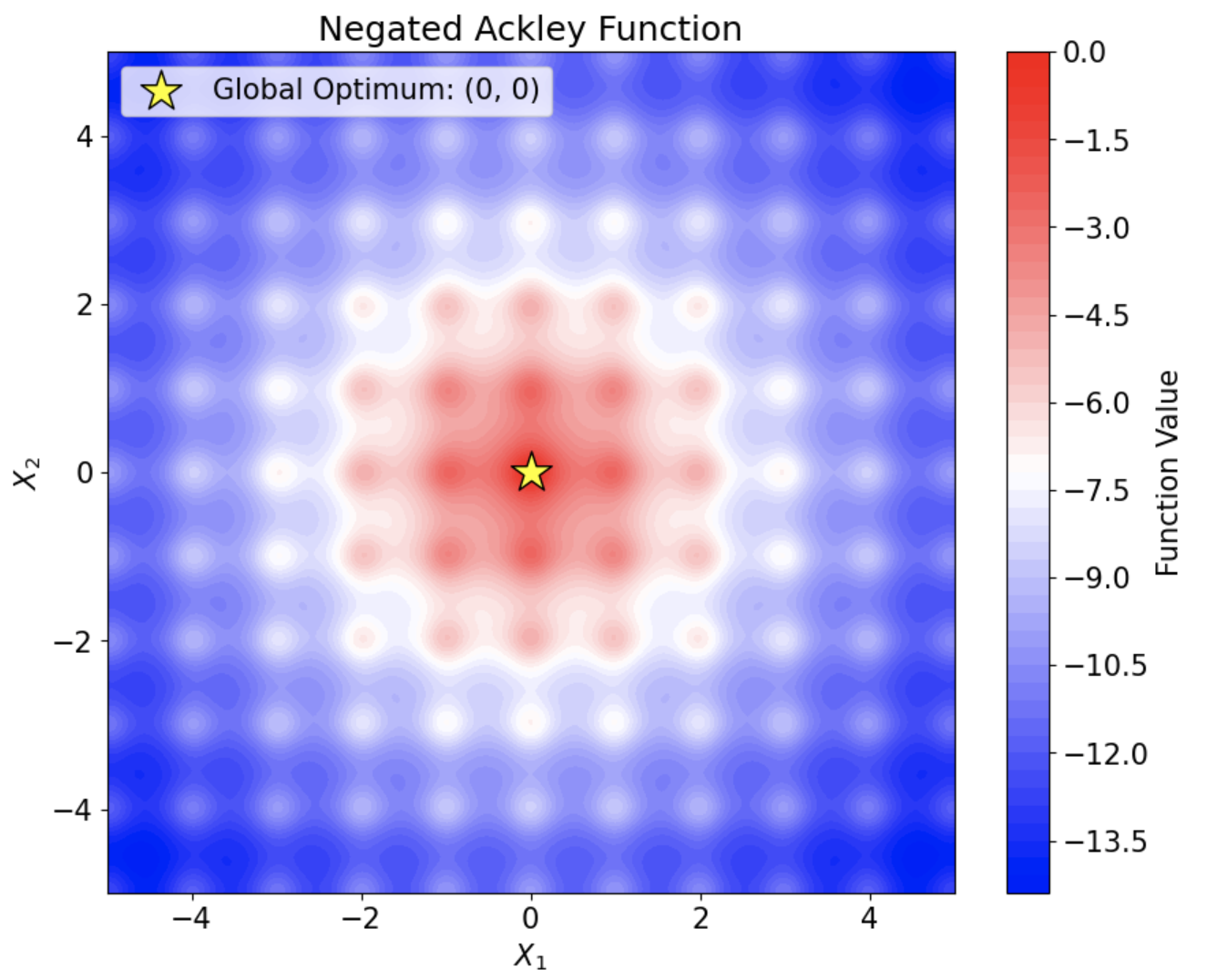}
        \caption{Plot of Ackley Function.}
    \end{subfigure}
    \caption{Plots of test objective functions.}
\end{figure}

\end{document}